\documentclass{article}
\PassOptionsToPackage{numbers, compress}{natbib}
\usepackage[preprint]{neurips_2026}
\usepackage[utf8]{inputenc}
\usepackage[T1]{fontenc}
\usepackage[pagebackref=false,breaklinks=true,colorlinks,bookmarks=false,citecolor=citecolor,linkcolor=linkcolor]{hyperref}
\usepackage{url}
\usepackage{booktabs}
\usepackage{amsfonts}
\usepackage{nicefrac}
\usepackage{microtype}
\usepackage{xcolor}
\usepackage{amsmath, amssymb}

\usepackage{amsthm}
\usepackage{pgfplots}
\usepackage{tikz}
\usepackage{bbm}
\usepackage{tikz-3dplot}
\usepackage[normalem]{ulem}
\usepackage{algorithm2e}

\usetikzlibrary{arrows.meta}
\usetikzlibrary{patterns}
\usepackage{multirow, multicol}

\usepackage{enumitem}
\usepackage{tabu}
\usepackage{comment}

\usepackage{algorithm}
\usepackage{subcaption}
\usepackage{graphicx}

\newcommand{\E}{\mathbb{E}}
\newcommand{\vol}{\textrm{Vol}}
\usepackage{graphicx}
\newtheorem{assumption}{Assumption}

\newtheorem{definition}{Definition}
\newtheorem{theorem}{Theorem}
\newtheorem{proposition}{Proposition}
\newtheorem{lemma}{Lemma}
\newtheorem{example}{Example}
\newtheorem{corollary}{Corollary}
\newtheorem{remark}{Remark}
\newtheorem{claim}{Claim}

\definecolor{citecolor}{HTML}{0071BC}
\definecolor{linkcolor}{HTML}{ED1C24}

\title{Stochastic Linear Contextual Bandits with Bounded Noise: A Set-Membership Approach}

\author{
Haonan Xu\\
The Grainger College of Engineering\\
University of Illinois Urbana-Champaign\\
Champaign, IL, 61801\\
\texttt{haonan9@illinois.edu} \\
\And
Yingying Li\\
The Grainger College of Engineering\\
University of Illinois Urbana-Champaign\\
Champaign, IL, 61801\\
\texttt{yl101@illinois.edu} \\
}

\begin{document}

\maketitle

\begin{abstract}
  This paper considers stochastic linear contextual bandits (SLCB) with \textit{bounded} reward noise. Existing works typically assume \textit{sub-Gaussian} reward noise and bounded expected rewards, under which the optimal regret bound scales as $\tilde O(\sqrt{T})$ in terms of horizon $T$. However,  in many applications, realized/observed rewards are also naturally bounded, implying bounded reward noise. Bounded noise is more informative than the sub-Gaussian condition but has not been leveraged explicitly in the SLCB literature. In this paper, we propose a novel algorithm SME-OFU by utilizing an uncertainty quantification method called set-membership estimation (SME) and applying the principle of optimism in the face of uncertainty (OFU). Our algorithm enjoys an improved regret bound  $O(\log T)$. Notice that this does not contradict the existing optimal bound $\tilde O(\sqrt{T})$ for sub-Gaussian noise because bounded noise is a stronger condition. Finally, simulations show empirical improvements of SME-OFU over a benchmark algorithm designed for sub-Gaussian noise when the reward noise is bounded. 
\end{abstract}

\section{Introduction}\label{sec:intro}
Contextual bandits provide a powerful framework to leverage contextual/side information for improving the online decision-making performance and have enjoyed wide applications in e.g., personalized recommendation systems \citep{yan2022dynamic,li2010contextual,tang2014ensemble}, mobile health interventions \citep{sekhari2023contextual,zhou2023spoiled,lei2017actor}, smart home control \cite{chen2024contextual,chen2020online}, etc. Among various formulations of contextual bandits, a fundamental and well-studied problem is \textit{stochastic linear contextual bandits} (SLCB), which considers the expected reward $\mathbb E\left(r_{t,a}\middle| X_{t,a}\right)$ generated by arm $a\in \{1, \dots, K\}$ at time $1\leq t\leq T$ as a linear function of the contextual information $X_{t,a}$ associated with arm $a$, i.e.,
\[
\mathbb{E}[\,r_{t,a} \mid X_{t,a}] = \theta_*^\top X_{t,a},
\]
where $\theta_* \in \mathbb{R}^d$ is an unknown parameter. The realized/observed reward is formulated as 
\[
r_{t,a} = X_{t,a}^\top \theta^\star + \eta_t,
\]
where $\eta_t$ is an i.i.d. noise that is usually assumed to follow sub-Gaussian distributions in the literature  \citep{abbasi2011improved, agrawal2013thompson, dimakopoulou2019balanced, hanna2023contexts, he2022nearly, kannan2018smoothed, kazerouni2017conservative,brunskill2025active}. Various algorithms have been developed for this formulation, including OFUL  \citep{abbasi2011improved, he2022nearly, kim2023contextual}, LinUCB \citep{li2010contextual, chu2011contextual, dimakopoulou2019balanced, wu2020stochastic, chowdhury2022shuffle}, LinTS \citep{agrawal2013thompson,may2012optimistic,kim2021doubly}. Theoretical regret bounds have been established in the order of $\tilde{O}(\sqrt{T})$ \cite{li2021tight,hanna2023contexts,abbasi2011improved,chu2011contextual, li2019nearly, kazerouni2017conservative,kim2021doubly}.

Notice that sub-Gaussian noises imply potentially unbounded realized/observed rewards $r_{t,a}$. However, in practical applications, the realized/observed rewards are usually  bounded. For example, for  personalized content recommendation \citep{li2010contextual,yan2022dynamic,tang2014ensemble}, the reward $r_{t,a}\in\{0,1\}$ is an indicator function reflecting if a user clicks on the content or not and is thus bounded by $0\leq r_{t,a}\leq 1$. Further, in mobile health applications \citep{sekhari2023contextual,zhou2023spoiled,lei2017actor}, the reward $r_{t,a}$ may be modeled as the sedentary time  \cite{lei2017actor} or the walking steps per day \cite{sekhari2023contextual}, which are inherently bounded since it is impossible for a user to sit or walk for infinitely long. Therefore, it is reasonable and practical to consider bounded realized rewards in stochastic linear contextual bandits, which further implies bounded noises $\eta_t$ in the reward formulation. 

Importantly, the   boundedness of  noise $\eta_t$ is a stronger (yet practical) condition than the standard sub-Gaussian noise assumption that has enabled the classical $\tilde O(\sqrt T)$ regret bound \cite{li2021tight,hanna2023contexts,abbasi2011improved,chu2011contextual}.  Therefore, a natural question arises as follows.
\begin{center}
\emph{Does the bounded noise condition enable a better regret bound  than $\tilde{O}(\sqrt{T})$   \\for stochastic linear contextual bandits?}
\end{center}

A promising direction to utilize bounded noise in learning/regression problems is set-membership estimation (SME). SME is an uncertainty quantification method that enjoys a long history of research in robust control and system identification \cite{bertsekas1971control,fogel1982value,livstone1996asymptotic,bai1995membership, bai1998convergence, bai1999convergence, akccay2004size}. Recently, SME has attracted increasing attention from the learning community due to its superior empirical performance in linear regression and linear system identification under bounded noises \citep{li2024learning,zeng2025system,yu2023online,xu2025sample,casini2017linear,musavi2024identification}. Interestingly, it has been recently established that SME can achieve $O(\frac{1}{T})$ convergence rate in linear regression under bounded noise \citep{li2024learning}, which is  faster than the $O(\frac{1}{\sqrt T})$ convergence rate of LSE for sub-Gaussian noise  \citep{abbasi2011online,simchowitz2018learning}.  However, to the best of our knowledge, there has been no  prior work applying SME to the bandit setting. One key challenge is the nontrivial evolution of the polytopic uncertainty sets generated by SME, which  substantially complicates the regret analysis.

\subsection{Our Contributions}
This paper proposes a novel algorithm SME-OFU by leveraging SME as the uncertainty quantification approach in the OFU framework (optimism in the face of uncertainty).  By  leveraging the bounded noise assumption, we establish an instance-\textit{independent} regret bound of order $O(\log T)$ under proper conditions.\footnote{It is worth mentioning that our bound is \emph{instance-independent}, which fundamentally differs from classical instance-dependent logarithmic regret bounds of the form $O(\frac{\log T}{\Delta})$ that require a  positive reward gap $\Delta$ between the optimal and second-best actions.} To the best of our knowledge, this is the first work showing that explicitly utilizing bounded reward noise can improve the regret guarantee of stochastic linear contextual bandits beyond the classical $\tilde O(\sqrt T)$ regret bound. 

In addition, our empirical studies also show that SME-OFU outperforms an existing algorithm \citep{abbasi2011improved} based on LSE's confidence regions when the noises are bounded, which is consistent with the comparison of the theoretical regret bounds.

Technically, the formal regret analysis is much more involved than the existing regret analysis of OFUL and LinUCB. A key challenge stems from the fact that SME's uncertainty sets are polytopes and do not enjoy an explicit formula, unlike LSE and its confidence regions. Further, existing statistical analyses of SME are no longer applicable here because they rely on a persistent excitation (PE) assumption, i.e. sufficient exploration on all directions, which is not satisfied in the bandit setting when the algorithm only explores the most likely arms sufficiently for better  exploration-exploitation trade-offs.  To address these challenges, we develop a novel proof framework based on convex geometry.\footnote{The existing proof in  \citep{li2024learning} for convergence rate of SME under the PE condition relies on direction vectors sampled from a fine discretization of unit sphere, which will lead to an exponential dependence  on dimension $d$ in the regret bound in the bandit setting when the PE condition does not hold. With our new proof technique based on convex geometry, we are able to obtain $O(d^3 \log(dT))$ regret bound.} Specifically, we employ the volume of the minimum-volume enclosing ellipsoid (MVEE) of the SME uncertainty set as a potential function and characterize the imbalanced evolution of the uncertainty sets by geometric properties such as weighted widths and deep cuts. More details of this proof are provided in Section 4 and the Appendix. Finally, the proof techniques developed in this paper lay a foundation for future analysis of  SME-based bandit algorithms.

\subsection{Related Works}
\paragraph{Stochastic linear contextual bandits.}
Stochastic linear contextual bandits have been extensively studied in the literature \citep{abe1999associative, auer2002using, chu2011contextual, abbasi2011improved, agrawal2013thompson, kannan2018smoothed, li2019nearly}. There are mainly two lines of algorithm design principles: (i) optimism in the face of uncertainty (OFU), such as LinUCB  \citep{li2010contextual, chu2011contextual}, OFUL \citep{abbasi2011improved}, and their variants \citep{he2022nearly, kim2023contextual, dimakopoulou2019balanced, wu2020stochastic, chowdhury2022shuffle}, and (ii) Bayesian approaches, such as LinTS  \citep{agrawal2013thompson} and its variations \citep{dimakopoulou2019balanced, kuzborskij2019efficient, zhang2022feel, chakraborty2023thompson}. 
Almost all the existing literature considers sub-Gaussian (unbounded) noises in the reward observations. 

Besides, some literature further assumes the contexts are i.i.d. drawn from certain (unknown) distribution \citep{dimakopoulou2019balanced, wu2020stochastic, papini2021leveraging, kim2023contextual}, while other literature allows arbitrarily varying contexts \citep{abbasi2011improved, chu2011contextual, agrawal2013thompson, chowdhury2022shuffle, xu2023noise}. This paper considers the latter setting. For both settings, the regret bounds for sub-Gaussian noises have been established in the literature to be $\tilde O(\sqrt T)$, which have been proved to be optimal in the linear bandit setting  \citep{li2021tight,chu2011contextual, li2019nearly}.

\paragraph{Adversarial linear contextual bandits.} Adversarial linear contextual bandits are also related to our formulation because they also consider bounded realized rewards, i.e. $r_{t,a}$ is uniformly bounded, which implicitly assumes bounded  noises $\eta_t$ \cite{liu2023bypassing,neu2020efficient}. However, the adversarial setting considers time-varying parameter $\theta_t$ in the expected reward $\mathbb E[r_{t,a}]=\theta_{t,*}^\top X_{t,a}$. It is an interesting future direction to develop SME-based adversarial bandit algorithms and derive their regret bounds.

\paragraph{Set membership estimation.} SME has a long history in robust control \citep{milanese1991optimal, adetola2011robust, guay2016robust, lorenzen2019robust, zhang2021trajectory} and system identification \citep{bai1995membership, bai1998convergence, bai1999convergence, akccay2004size, li2024learning, xu2025sample, zeng2025system}. Instead of estimating parameters through probabilistic confidence ellipsoids, SME constructs feasible parameter sets consistent with bounded noise.  

Recently, SME has attracted renewed interest in statistical  learning due to its strong empirical performance in the bounded-noise settings \citep{casini2017linear, li2024learning, xu2025sample}. Further, recent works have established improved convergence rates $O(\frac{1}{T})$ for SME in linear regression problems \citep{akccay2004size, akccay2006convergence, li2024learning, xu2025sample, zeng2025system}, showing that the bounded noise assumption can lead to  convergence compared with LSE under sub-Gaussian noises, whose convergence rate is $O(\frac{1}{\sqrt T})$ \citep{lai1982least, abbasi2011online, hsu2012random, simchowitz2018learning}. 
Despite these recent advances, current SME literature has primarily focused on regression/estimation problems and (robust) control problems. To the best of our knowledge, this paper is the first to incorporate SME into the  bandit framework.

\paragraph{Notations.}
For any positive integer $K$, let $[K]$ denote $\{1, \dots, K\}$.  In addition, we denote $\{X_{t,a}\}_{t\in[T], a\in[K]}=\{X_{t,a} \mid t\in[T], a\in[K]\}$. We use $O(\cdot)$ and $\Omega(\cdot)$ to denote the standard Big-O and Big-Omega notation, respectively. $[-S,S]^d$ denotes a $d$-dimensional hypercube: $\{\theta \in \mathbb R^d:\|\theta\|_\infty \leq S\}$. $B_2^d$ denotes the $\ell_2$ unit ball in $\mathbb{R}^d$.

\section{Problem Formulation}\label{sec:formulation}
We consider the following stochastic  linear contextual bandit (SLCB) problem, where an agent interacts with an environment for a horizon of length $T$.  At each round $1\leq t\leq T$, there are $K$ arms to choose from, and for each arm $a\in [K]$, the agent observes a $d$-dimensional context vector, denoted as $X_{t,a}\in \mathbb R^d$. Notice that the context vectors $X_{t,a}$ can vary arbitrarily and may even be generated by an oblivious adversary, as considered in the literature \cite{chu2011contextual,li2010contextual,foster2020beyond,abbasi2011improved}.

After receiving the context vectors $X_{t,1},\cdots,X_{t,K}$ at round $t$, the agent selects an arm $a_t\in[K]$ and receives a reward 
\begin{equation}
    \label{eq: cb reward}
    r_{t,a_t} = \theta_*^\top X_{t,a_t}+\eta_t,
\end{equation}
where $\theta_*\in\mathbb{R}^d$ is an unknown  parameter vector and $\eta_t \in \mathbb R$ is an i.i.d.  random noise. 
Here, we follow the standard assumption in the SLCB literature \citep{abbasi2011improved,auer2002using, chu2011contextual,kazerouni2017conservative, kannan2018smoothed, li2019nearly} and consider bounded parameter vector and bounded contexts, that is, there exist finite values $S$ and $L$ such that $\|\theta_*\|_\infty \leq S<+\infty $  and $\|X_{t,a}\|_2\leq L$  almost surely for all $t\in [T]$ and all $a\in [K]$.

The agent selects arm $a_t$ at round $t$ based on  the history information up to round $t$, including all the context information so far: $\{X_{\tau, a}\}_{a\in [K], \tau \in [t]}$, and the previous arm selections and realized rewards: $\{r_{\tau, a_{\tau}}, a_{\tau}\}_{\tau \in [t]}$. The goal of the agent is to minimize the expected cumulative regret defined below.\footnote{Strictly speaking, this regret is a pseudo regret because the benchmark is optimal expected reward instead of optimal realized reward.} \begin{equation}\label{eq:pseudo-regret}
    {R}(T) := \sum_{t=1}^T \left(\theta_*^\top X_{t,a_t^*} - \theta_*^\top X_{t,a_t}\right),
\end{equation}
where $a_t^* := \arg\max_{a\in[K]} \theta_*^\top X_{t,a}$.

The problem formulation so far is standard in the SLCB literature \citep{abbasi2011improved,auer2002using, chu2011contextual,kazerouni2017conservative, kannan2018smoothed, li2019nearly}. 

\paragraph{Assumptions.}
In the following, we discuss two additional assumptions on  $\eta_t$ adopted in this paper, as well as their motivations and implications.
Firstly, we assume bounded noise below.
\begin{assumption}[i.i.d. \& bounded noise]\label{ass: noise}
$\{\eta_t\}_{t\geq 1}$ are sampled i.i.d. from some unknown distribution with $\E[\eta_t]=0$ and $|\eta_t| \leq \eta_{\max}$ for all $t\geq 1$, where the upper bound $0\leq \eta_{\max}<+\infty$ is known. 
\end{assumption}
Compared with the standard assumption on $\eta_t$ in the SLCB literature, the common assumptions are i.i.d. and zero-mean, but our Assumption \ref{ass: noise} imposes a stronger condition that $\eta_t$ should be bounded, whereas in the literature $\eta_t$ is only assumed to follow some sub-Gaussian distributions. Despite being a stronger assumption, it is worth noting that bounded noise is common in the real world applications of SLCB. 
\begin{example}[Personalized Recommendation \citep{li2010contextual}.]
    Personalized content recommendation is a classic application of SLCB, where the reward is usually modeled as $r_{t,a}=1$ if the user clicks on the content and $r_{t,a}=0$ if the user does not. Therefore, the realized reward is bounded by $[0,1]$. The expected reward $\E[r_{t,a}]= \theta_*^\top X_{t,a}$ represents the clickthrough rate (CTR), which is also  bounded by $[0,1]$. Therefore, the noise $\eta_t= r_{t,a}- \E[r_{t,a}]$ is bounded by $[-1,1]$. 
\end{example}

\begin{example}[Mobile Health \citep{lei2017actor,sekhari2023contextual}.]
    SLCB has also been applied to mobile health applications. For example, in \citep{lei2017actor}, the reward $r_{t,a}$ is modeled as the sedentary time within a three-hour window, which is upper bounded by three hours. The expected sedentary time is also upper bounded by three hours. Therefore, the noise $\eta_t= r_{t,a}- \E[r_{t,a}]$ is bounded by $[-3,3]$ hours. 

    Besides, in \citep{sekhari2023contextual}, the reward $r_{t,a}$ is modeled as the walking steps per day, which is always bounded, so is the expected walking steps per day. Therefore, the noise $\eta_t= r_{t,a}- \E[r_{t,a}]$ is also bounded.
\end{example}

It is worth mentioning that bounded noise is necessary for the implementation of our algorithm in Section \ref{sec: alg} since our algorithm relies on an uncertainty quantification method called set-membership estimation (SME), and SME is developed mainly for bounded noises.

Next, we assume the noise bound is tight.
\begin{assumption}[Tight Bound]\label{ass: boundary-visiting} There exists a positive value $\epsilon_0>0$ and a  function $q(\cdot)>0$ such that 
$$\mathbb{P}(\eta_{\max}-\eta_t\leq\epsilon)\geq q(\epsilon), \quad \ \mathbb{P}(\eta_t+\eta_{\max}\leq\epsilon)\geq q(\epsilon) $$
for any $\epsilon \in[0, \epsilon_0]$. Further, we assume $q(\epsilon) \geq c_0\epsilon\ $ for certain  $c_0>0$.
\end{assumption}

Though Assumption \ref{ass: boundary-visiting} is quite restrictive because it is usually challenging to know a tight bound on the noise term, it is worth highlighting that Assumption \ref{ass: boundary-visiting} is \textit{not} required for our algorithm implementation, but is imposed  for the theoretical analysis in Section \ref{Sec: theorem}.

Further, Assumption \ref{ass: boundary-visiting} is a standard assumption for the theoretical analysis of the vanilla version of SME in the literature \citep{bai1998convergence, bai1999convergence, lu2021robust, li2024learning, xu2025sample}. This paper only adopts the vanilla version of SME because our main goal is to show an improved regret bound is attainable after applying a mild and practical assumption of bounded noise. There are variations of SME that do not require knowing a tight bound on $\eta_t$, but can learn the  bound $\eta_{\max}$ together with reducing the uncertainty set of $\theta_*$ by online data \citep{li2024learning}. We  leave the extension to more advanced SME variations as future work.

It is also worth mentioning that the symmetric bound on $\eta_t$, i.e. $-\eta_{\max}\leq \eta_t\leq \eta_{\max}$ is only imposed for notational simplicity. Our analysis can be easily generalized to asymmetric case: $\eta_{\min}\leq \eta_t\leq \eta_{\max}$ as long as $\eta_{\min}<0$ and $\eta_{\max}>0$.

Finally, the lower bound on $q(\epsilon)$ is motivated by a broad class of distributions. It is easy to show that $q(\epsilon)\geq \Omega(\epsilon) $ for various distributions, such as Bernoulli distributions, Uniform distributions, truncated Gaussian distributions, etc.

\section{Algorithm Design}\label{sec: alg}
This section introduces our algorithm design of SME-OFU. Our algorithm is based on an uncertainty quantification method for bounded-noise regression: SME. In the following, we will first review SME and then discuss our algorithm design.

\subsection{Preliminaries:  Set-Membership Estimation (Vanilla Version)}
SME is an uncertainty quantification method for regression with bounded noise and has enjoyed a long history of research in robust control \citep{milanese1991optimal, adetola2011robust, guay2016robust, lorenzen2019robust, zhang2021trajectory} and system identification literature \citep{bai1995membership, bai1998convergence, bai1999convergence, akccay2004size, li2024learning, xu2025sample, zeng2025system}. We focus on the vanilla version of SME in this paper.

In particular, at each time $t\geq 1$, SME utilizes the noise bound $|\eta_t|\leq \eta_{\max}$ and the historical data $\{X_{s,a_s}, r_{s,a_s}\}_{s=1}^{t}$ to construct the uncertainty set of the parameter $\theta_*$:
\begin{equation}
    \label{eq: membership set}
    \Theta_0 = [-S,S]^d,\ \Theta_t = \Theta_{t-1}\cap\left\{\hat\theta\ :\ |r_{t,a_t} - \hat\theta^\top X_{t,a_t}|\leq\eta_{\max}\right\}.
\end{equation}
We can verify that $\theta_*\in \Theta_t$ for all $t\geq 1$ as long as $\eta_{\max}$ is a valid upper bound on the noise term. To see this,  $\forall\,t\geq 1$, $\theta_*\in\Theta_t$, notice that $\forall\,s\leq t$, $r_{s,a_s} - \theta_*^\top X_{s,a_s} = \theta_*^\top X_{s,a_s} + \eta_s - \theta_*^\top X_{s,a_s} = \eta_s \in [-\eta_{\max},\eta_{\max}]$. 

Notice that, unlike probabilistic confidence regions utilized in OFUL and LinUCB that only contain $\theta_*$ with a high probability  \cite{abbasi2011improved, abbasi2011online, kazerouni2017conservative, chu2011contextual}, $\Theta_t$ by SME includes the true parameter vector $\theta_*$ with probability 1.

\subsection{Our Algorithm Design: SME-OFU}
Our algorithm design adopts the principle of optimism in the face of uncertainty (OFU). With the uncertainty set constructed by SME in \eqref{eq: membership set}, the arm selection by OFU is straightforward: 
\begin{equation}
    \label{eq: ofu}
    (\theta_t, a_t) = \arg\max_{\theta\in\Theta_{t-1}, a\in[K]}\theta^\top X_{t,a}.
\end{equation}
Namely, we choose the arm with the largest estimation of the expected reward over the uncertainty set $\Theta_{t-1}$. The algorithm is called SME-OFU and a pseudo-code is  provided in Algorithm \ref{alg:SME-OFUL}.

Notice that SME-OFU can be implemented without knowing horizon $T$ or boundary-visiting quantities $q(\cdot),c_0,\epsilon_0$. SME-OFU only relies on a valid noise upper bound $\eta_{\max}$. More discussions are provided in the remarks below.

\begin{remark}
    In practice, when a tight noise bound  $\eta_{\max}$ is unknown, one can still apply SME to obtain valid uncertainty sets using conservative estimations of $\eta_{\max}$. 

There  are variations of SME that can learn a tight bound $\eta_{\max}$ and update $\Theta_t$ together during the online learning process, which enjoys similar theoretical performance and empirical performance  \cite{li2024learning}. It has been recently shown that such variations enjoy the same sample complexity with  that of the vanilla SME in terms of $T$, but has a slightly worse dependence on dimension $d$ (for more details, please refer to  \citep{li2024learning}). We  leave the extension to more advanced SME variations as future work. We expect the same regret order in terms of $T$ but slightly worse dependence on $d$.
\end{remark}

\begin{remark}
    When the bound on $\eta_t$ is not symmetric, e.g. $\eta_{\min} \leq \eta_t \leq \eta_{\max}$ when $\eta_{\max}\not=-\eta_{\min}$, we can simply replace the SME updating rule in Algorithm \ref{alg:SME-OFUL} by $\eta_{\min} \leq r_{t,a_t} - \hat{\theta}^\top X_{t,a_t} \leq \eta_{\max}$.
\end{remark}

\begin{remark}
    Even though vanilla SME  requires an intersection of an increasing number of linear constraints in \eqref{eq: membership set}, which suffers from high computational cost, there are numerous attempts on fast SME algorithm design in the literature. It is also an exciting future direction to design SLCB algorithm with these fast SME algorithms and conduct regret analysis.
\end{remark}

\begin{algorithm}[htp]
\caption{SME-OFU}\label{alg:SME-OFUL}
\textbf{Inputs:} $\Theta_0=[-S,S]^d$, $K$, $\eta_{\max}$, $d$.\\
\For{$t=1,2,\cdots,T$}{
Observe $X_{t,1},\cdots,X_{t,K}$;\\
Select $a_t\in[K]$ by
$$a_t\gets \arg\max_{\theta\in\Theta_{t-1}, a\in[K]} \theta^\top X_{t,a}$$\\
Receive reward $r_{t,a_t}$;\\
Update the uncertainty set by SME: 
$$\Theta_t\gets \Theta_{t-1}\cap\left\{\hat{\theta}\in\mathbb{R}^d\ :\ |r_{t,a_t} - \hat{\theta}^\top X_{t,a_t}| \leq \eta_{\max}\right\}.$$
}
\end{algorithm}

\section{Theoretical Results}\label{Sec: theorem}
\subsection{Regret Upper Bounds}
This section introduces a major theoretical contribution of this paper:  a high-probability regret upper bound  for our SME-OFU algorithm.

\begin{theorem}
    [Regret Bound]\label{thm: reg bound}
    Under Assumptions \ref{ass: noise} and \ref{ass: boundary-visiting}, for any  $\delta\in(0,1)$, $\mu > 0$,  $\xi\in(0,1)$, $d\geq 2$, with probability no less than $1-\delta$, we have
    \begin{equation*}
        R(T) \leq \frac{4d\mu LT}{\xi} + \left(1 + \frac{\xi L(2S\sqrt{d}+\mu)}{2d\epsilon_0}\right)\left[\frac{16d^2(d+1)}{c_0\xi (1-\xi)^2}\log\left(\frac{2S\sqrt{d}}{\mu}+1\right) + \frac{32d}{3c_0\xi }\log\frac{1}{\delta}\right].
    \end{equation*}
    where $d$ is the dimension of the unknown parameter vector, $\mu>0$ and $\xi \in (0,1)$ are free variables to  be decided to minimize the regret upper bound, $L$ is the upper bound on contexts: $\|X_{t,a}\|_2\leq L$, as defined in Section \ref{sec:formulation}, $S$ is the upper bound  $\|\theta_*\|_\infty \leq S$ defined in Section \ref{sec:formulation}, $\epsilon_0$ and $c_0$ are defined in Assumption \ref{ass: boundary-visiting}.
\end{theorem}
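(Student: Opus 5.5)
The plan is to reduce regret to the widths of the SME sets along the chosen contexts, and then control how often those widths can be large using a volume potential. By optimism and $\theta_*\in\Theta_{t-1}$ almost surely, the per-round regret satisfies
\[
\theta_*^\top X_{t,a_t^*}-\theta_*^\top X_{t,a_t}\le \theta_t^\top X_{t,a_t}-\theta_*^\top X_{t,a_t}\le w_t:=\max_{\theta,\theta'\in\Theta_{t-1}}(\theta-\theta')^\top X_{t,a_t},
\]
the width of $\Theta_{t-1}$ in direction $X_{t,a_t}$. I would split the rounds into "small-width" rounds, with $w_t\le 4d\mu L/\xi$ (or an equivalent normalized threshold), and "large-width" rounds. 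The small-width rounds contribute at most $4d\mu LT/\xi$, which is the first term of the bound. Each large-width round costs at most $w_t\le L(2S\sqrt d+\mu)$, since $\Theta_{t-1}\subseteq[-S,S]^d$ has diameter at most $2S\sqrt d$. The main task is therefore to bound the number $N$ of large-width rounds by the bracketed expression. The prefactor $1+\xi L(2S\sqrt d+\mu)/(2d\epsilon_0)$ suggests treating separately the large-width rounds where $\xi w_t/(2d)$ exceeds $\epsilon_0$, so that Assumption \ref{ass: boundary-visiting} is only ever applied with $\epsilon\le\epsilon_0$.

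\textbf{Potential.} Let $\Phi_t=\log\vol(E_t)$, where $E_t$ is the MVEE of $\Theta_t+\mu B_2^d$; the $\mu$-inflation keeps the volume bounded below. Initially $\Phi_0\le d\log(2S\sqrt d+\mu)+\text{const}$, and for all $t$ we have $\Phi_t\ge d\log\mu+\text{const}$, so the total decrease is at most $d\log(2S\sqrt d/\mu+1)$ up to $d$-dependent factors. These factors (John's theorem, with an $O(d)$ or $d^{3/2}$ rounding loss) would give the $d^2(d+1)$ term.

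\textbf{Deep-cut step.} On a large-width round, the observation yields two cuts, $\hat\theta^\top X\ge r_t-\eta_{\max}$ and $\hat\theta^\top X\le r_t+\eta_{\max}$. Since $r_t=\theta_*^\top X+\eta_t$, whenever $\eta_t\ge\eta_{\max}-\epsilon$ the lower cut lies within $\epsilon$ of $\theta_*$'s hyperplane. Which side is relevant depends on where $\theta_*$ sits in the slab, but one of the two boundary events suffices, and each has probability at least $q(\epsilon)\ge c_0\epsilon$. Taking $\epsilon=\xi w_t/(2d)$, the cut removes a cap whose depth is a $(1-\xi)$-fraction of the width measured from the far side.

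I would then invoke the deep-cut ellipsoid formula. For an ellipsoid $E$ and a half-space at normalized depth $\alpha\ge -1/d$ (here $\alpha$ is tied to the position of $\theta_*$ and to $\xi$), the MVEE of $E\cap H$ has volume at most $\exp(-c\,\xi(1-\xi)^2/d)\,\vol(E)$. Hence each "successful" large-width round decreases $\Phi$ by at least $\Omega(\xi(1-\xi)^2/d)$. Combining with the total decrease bound, the number of successful rounds is at most $O\!\left(\frac{d^2(d+1)}{\xi(1-\xi)^2}\log(2S\sqrt d/\mu+1)\right)$.

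\textbf{Concentration.} Each large-width round succeeds with conditional probability at least $p=c_0\xi w_t/(2d)$, and because $w_t$ is at least the threshold, $p$ is bounded below. A Freedman/Bernstein martingale bound on the success indicators then shows $N\le \frac{2}{p}\cdot(\#\text{successes})+\frac{16}{3p}\log\frac1\delta$. This produces the $\frac{1}{c_0\xi}$ factors, including the $\frac{32d}{3c_0\xi}\log\frac1\delta$ term. Multiplying $N$ by the per-round cost and normalizing gives the stated bound.

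The main obstacle is the deep-cut step. The width is measured for $\Theta_{t-1}$, but the potential lives on the MVEE of the inflated set. I would need to relate the width of $\Theta_{t-1}$ along $X$ to the width of $E_{t-1}$, losing at most a factor $d$ by John's theorem. I would also need to verify that a cut removing a $(1-\xi)$ fraction of the polytope's width is still a deep enough cut on the ellipsoid, uniformly over $\theta_*$'s location. My first attempt would be to use the fact that $\theta_*$ lies inside, choose the boundary event on the side farther from $\theta_*$ so that at least half the width is cut, and absorb the remaining slack into $\xi$.
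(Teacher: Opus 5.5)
Your skeleton (optimism, the log-volume of the MVEE of the $\mu$-inflated set as potential, boundary-visiting noise producing deep cuts, Freedman) is the paper's. However, the step you flag as the main obstacle fails as you set it up.

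The deep-cut volume bound needs the cut's normalized offset relative to the current MVEE $E_{t-1}=\mathcal{E}(B_{t-1},c_{t-1})$ to satisfy $q\le\xi/d$. For $q\ge 1/d$ the MVEE of $E_{t-1}\cap H$ is $E_{t-1}$ itself, and the formula gives no decrease. Choosing the side by where $\theta_*$ sits within the polytope's projection (``cut at least half the width'') does not ensure $q\le\xi/d$, because that projection can be lopsided about the MVEE center. For a regular simplex inscribed in its MVEE, the projection onto a vertex direction is $[-\rho/d,\rho]$ around the center. With $\theta_*$ just below the midpoint of that interval, your rule selects a cut with $q\approx(1-1/d)/2$, which is $\ge 1/d$ for $d\ge 3$. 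No $\xi<1$ absorbs that, and John's theorem does not help.

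The missing idea is to choose the side by comparing $\theta_*$ with the MVEE center. Set $\alpha_t=x_t^\top c_{t-1}$ and $\rho_t=\sqrt{x_t^\top B_{t-1}x_t}$. The regularized cuts then have offsets $q_t^+=(b_t'+\mu L-\alpha_t)/\rho_t$ and $q_t^-=(b_t+\mu L+\alpha_t)/\rho_t$. The $\alpha_t$ terms cancel in the sum, so one of the events $\{q_t^\pm\le\xi/d\}$ holds as soon as $\eta_t$ is within $\xi\rho_t/d-\mu L$ of the corresponding endpoint $\mp\eta_{\max}$. When $W_t=2\rho_t\ge w_\mu$, this margin is at least $\xi W_t/(4d)$, so the event has conditional probability at least $c_0\min\{\epsilon_0,\xi W_t/(4d)\}$. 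No comparison of polytope and ellipsoid widths is needed. Since $0\in E_{t-1}$, one has $h(t)\le W_t$, and the paper simply works with $W_t$ throughout.

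Two smaller corrections to your constants. The volume drop per deep cut is $\Omega_{d,\xi}\ge(1-\xi)^2/(2(d+1))$, with no factor $\xi$; the $1/\xi$ in the bound comes from the probability. The $d^2(d+1)$ comes from three sources: the potential range $d\log(\cdot)$, the factor $1/\Omega_{d,\xi}$, and the $\xi/d$ depth threshold. It does not come from John's theorem.

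Your aggregation also does not yield the stated bound. At the threshold $w_t\ge 4d\mu L/\xi$, your success probability is only bounded below by $2c_0\mu L$. For the corollary's $\mu$ this is of order $1/T$. So bounding the number $N$ of large-width rounds with a uniform $p$ and multiplying by the maximal cost $L(2S\sqrt d+\mu)$ gives a vacuous bound. (The bracketed term has the units of a width, not a count.)

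Instead, apply the martingale bound to $\sum_t p_t$ with the round-dependent lower bound $p_t\ge\mathbbm{1}\{W_t\ge w_\mu\}c_0\min\{\epsilon_0,\xi W_t/(4d)\}$. This controls $\sum_t\mathbbm{1}\{W_t\ge w_\mu\}\min\{W_t,w_\xi\}$ with $w_\xi=4d\epsilon_0/\xi$. The prefactor $1+\xi L(2S\sqrt d+\mu)/(2d\epsilon_0)=1+W^\mu_{\max}/w_\xi$ then comes from $W\le(1+W^\mu_{\max}/w_\xi)\min\{W,w_\xi\}$ for $W\le W^\mu_{\max}=2L(2S\sqrt d+\mu)$, not from a per-round cost.
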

The proof of Theorem \ref{thm: reg bound} is briefly discussed in Section \ref{sec: proof sketch} and detailed in Appendices \ref{app sec: thm1}-\ref{app: supportive}.

To better interpret the regret upper bound in Theorem \ref{thm: reg bound}, we introduce a corollary below to show that, with proper choices of $\mu$ and $\xi$, the regret upper bound can be simplified to $\mathcal{O}(d^3\log({d}T/\delta))$. 
\begin{corollary}[Simplified Regret Bound]\label{cor: rate}
    Under the same conditions as Theorem \ref{thm: reg bound} and let $\mu = \frac{\xi}{4dLT}$ and $ \xi = \frac{1}{2}$,  with probability no less than $1-\delta$, we obtain
    \begin{equation*}
        R(T) \leq \mathcal{O}\left(d^{3}\log (dT) + d\log\frac{1}{\delta}\right) \leq \mathcal{O}\left(d^{3}\log\frac{dT}{\delta}\right).
    \end{equation*}
\end{corollary}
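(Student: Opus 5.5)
The corollary follows from Theorem \ref{thm: reg bound} by direct substitution, so the plan is to plug in $\xi=\tfrac12$ and $\mu=\frac{\xi}{4dLT}=\frac{1}{8dLT}$ and bound each term separately. Since the theorem holds for every fixed $\mu>0$ and $\xi\in(0,1)$, and our choice of $\mu$ depends only on the known quantities $d,L,T$ (not on the data), the same event of probability at least $1-\delta$ applies.

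\emph{First term.} With these choices,
\[
\frac{4d\mu LT}{\xi} \;=\; \frac{4dLT}{\xi}\cdot\frac{\xi}{4dLT} \;=\; 1 .
\]
So the linear-in-$T$ term collapses to a constant. This cancellation is the whole reason for the choice of $\mu$.

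\emph{Prefactor.} Substituting gives
\[
1+\frac{\xi L(2S\sqrt d+\mu)}{2d\epsilon_0} \;=\; 1+\frac{L(2S\sqrt d+\mu)}{4d\epsilon_0}.
\]
Since $\mu\le 1/(8dLT)$ is at most a constant, this prefactor is $O(1+LS/(\sqrt d\,\epsilon_0))$. It is therefore bounded in $d$ and $T$, and we treat $L,S,\epsilon_0,c_0$ as constants.

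\emph{Bracket.} With $\xi=\tfrac12$ we have $\xi(1-\xi)^2=\tfrac18$, so the first bracket term becomes
\[
\frac{128\,d^2(d+1)}{c_0}\log\bigl(16 S L\, d^{3/2}\, T+1\bigr) \;=\; O\bigl(d^3\log(dT)\bigr),
\]
using $\log(C d^{3/2}T+1)\le O(\log(dT))$ for $d\ge 2$ and $T\ge 1$. The second bracket term is
\[
\frac{64 d}{3c_0}\log\frac1\delta \;=\; O\Bigl(d\log\frac1\delta\Bigr).
\]

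\emph{Combining.} Multiplying the bracket by the bounded prefactor and adding the constant $1$ gives
\[
R(T)=O\Bigl(d^3\log(dT)+d\log\tfrac1\delta\Bigr).
\]
For the final inequality, note $d\log(1/\delta)\le d^3\log(1/\delta)$, and $\log(dT)+\log(1/\delta)=\log(dT/\delta)$, so the whole expression is $O(d^3\log(dT/\delta))$.

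There is no substantive obstacle. The only care needed is in the $O(\cdot)$ bookkeeping: the problem constants $L,S,\epsilon_0,c_0$ are absorbed into the $O(\cdot)$, and we must check that $\mu$ stays bounded so that the prefactor does not introduce any additional $T$-dependence.
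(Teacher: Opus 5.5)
Your proposal is correct and takes the same approach as the paper, whose proof is just the one-line substitution of $\xi=\tfrac12$ and $\mu=\frac{\xi}{4dLT}$ into Theorem~\ref{thm: reg bound}. You carry out the bookkeeping the paper leaves implicit: the linear term becomes $1$, the prefactor stays bounded, $\xi(1-\xi)^2=\tfrac18$, and $\log(16SLd^{3/2}T+1)=O(\log(dT))$.
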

The proof is also deferred to Appendices \ref{app sec: thm1}-\ref{app: supportive}. We provide discussions on this regret bound below.

\paragraph{Comparison with OFUL and LinUCB.}  Corollary \ref{cor: rate} shows that SME-OFU enjoys a regret bound of $\mathcal{O}\left(d^{3}\log\frac{dT}{\delta}\right)$. Recall that the optimal regret bound order is $\tilde{\mathcal{O}}(\sqrt{T})$  for sub-Gaussian noises, which are derived for existing SLCB algorithms such as LinUCB \cite{chu2011contextual,li2019nearly} and OFUL \citep{abbasi2011improved}. When comparing the regret bound of SME-OFU with those of LinUCB and OFUL, we can observe that the dependence on $T$ is  significantly improved, from $\sqrt T$ to $\log T$, but the dependence on dimension $d$ has become worse. 

On the one hand, the improvement regret order on $T$ reflects the inherent benefits of utilizing boundedness of noise $\eta_t$, which is stronger than the sub-Gaussian condition for the existing regret bound $\mathcal{O}(d\sqrt{T}\log T) $. 

On the other hand, the worse dependence on $d$ is likely due to both inherent complicacy  of geometric updates of uncertainty sets of SME and imperfect theoretical analysis in terms of $d$. In this paper, we focus on the theoretical improvement on $T$ brought by the first SLCB algorithm utilizing the bounded noise, and do not fine-tune the theoretical bounds with respect to $d$, which is left as future work. Simulation results in \citep{li2024learning} have shown that the empirical performance of SME and LSE's confidence sets are not significantly different as $d$ increases in regression/system identification scenarios, which is likely to indicate similar effect in the regret bounds, but formal verification is left for future work.

Lastly, we point out that our $O(\log(T))$ regret bound is instance-independent in the sense that we do not need a known gap/instance between the optimal and second-optimal rewards. For instance-dependent scenarios, where the gap/instance  $\Delta$ between the optimal and second-optimal rewards is known, the regret bound is usually better and even for sub-Gaussian noises existing algorithms can achieve $O(\frac{\log T}{\Delta})$. It is an interesting direction to study the instance-dependent regret  of SME-OFU.

\subsection{Proof Sketch for Theorem \ref{thm: reg bound}}\label{sec: proof sketch}
Our proof of Theorem \ref{thm: reg bound} is built on convex geometric analysis and mainly consists of five steps. 

\paragraph{Step 1: Bound the regret with cumulative summation of support functions.}

The first step is standard in the SLCB literature. We leverage the OFU principle to bound the  regret below.
\begin{align*}
    R(T) &= \sum_{t=1}^T \theta_*^\top \left(X_{t,a_t^*} - X_{t,a_t}\right)\leq \sum_{t=1} \theta_t^\top X_{t,a_t} - \theta_*^\top X_{t,a_t}=\sum_{t=1}^T(\theta_t - \theta_*)^\top X_{t,a_t},
\end{align*}
where $a_t^* := \arg\max_{a\in[K]} \theta_*^\top X_{t,a}$, and $(\theta_t, a_t) := \arg\max_{\theta\in\Theta_{t-1}, a\in [K]} \theta^\top X_{t,a}$. 

Define the error set of  SME's uncertainty set as
\begin{equation*}
    \Gamma_t := \Theta_t - \theta_*,\quad \forall\, t\geq 1.
\end{equation*}
The support function of $\Gamma_t$ in the direction of $X_{t,a_t}$ is defined as
\begin{equation*}
    h(t) := \sup_{\gamma\in \Gamma_{t-1}} X_{t,a_t}^\top \gamma.
\end{equation*}
Notice that $X_{t,a_t}$ does not have to be a unit direction vector. 

Then, it is straightforward to bound the regret by the cumulative summation of support functions:
\begin{equation}\label{eq: reg leq supp}
    R(T) \leq \sum_{t=1}^T h(t),
\end{equation}
because $\theta_t-\theta_* \in \Theta_{t-1}-\theta_*\in \Gamma_{t-1}$.

\paragraph{Step 2: Bound the support functions with weighted ellipsoidal widths.} First, we connect the support function $h(t)$ with the size of the membership set $\Theta_{t-1}$ (equivalently, $\Gamma_{t-1}$). We first introduce the notion of regularized error sets and the minimum-volume bounding ellipsoids (MVEE).
\begin{definition}
    [Regularized error set] For a pre-selected, fixed $\mu >0$, the $\mu$-regularized error set of $\Gamma_{t}$ is
$    K_{t}^\mu := \Gamma_t + \mu B_2^d,
$ where $B_2^d$ is the $\ell_2$ unit ball in $\mathbb{R}^d$.
\end{definition}
\begin{remark}
    On one hand, the $\mu$-regularization of the error set $\Gamma_t$ guarantees that $K_t^\mu$ is a convex body (i.e. with a non-empty interior), which is necessary for the existence and uniqueness of MVEE (to be introduced next). On the other hand, the value of $\mu>0$ can be chosen to derive the $\tilde{O}(d^3\log(dT))$ regret bound in Corollary \ref{cor: rate}. Notice that the regularization constant $\mu$ is totally defined for analytical purposes, and is not needed in the SME-OFU algorithm \ref{alg:SME-OFUL}. Therefore, we do not require the horizon $T$ to be known beforehand.
\end{remark}
For the measurement of the SME's size, we introduce the notion of the MVEE. We state a formal definition of enclosing ellipsoids, and quote an existence and uniqueness theorem of MVEE.
\begin{definition}
    [Enclosing ellipsoid.]\label{def: MVEE}
    For a compact convex body $K\subseteq\mathbb{R}^d$ (i.e. $K$ is closed, bounded, convex, and has a non-empty interior) and an ellipsoid $\mathcal{E}(B,c)=\{y\in\mathbb{R}^d\ :\ (y-c)^\top B^{-1}(y-c)\leq 1\}$ where $B\succ 0$, we say $\mathcal{E}(B,c)$ encloses $K$ if $K\subseteq E(B,c)$.
\end{definition}
\begin{proposition}[{Existence and uniqueness of MVEE
\citep[Theorems~2.2 and~2.7]{GulerGurtuna2007};
see also \citep{Gruber2011,John1948}}]
    Let $K \subseteq \mathbb{R}^d$ be a convex body. Then there exists a unique ellipsoid of minimum volume containing $K$.
\end{proposition}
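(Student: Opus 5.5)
This is a classical existence-and-uniqueness result (John's ellipsoid), and the plan is to prove existence by compactness and uniqueness by strict log-concavity of the determinant.

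\textbf{Existence.} Parametrize ellipsoids as $\mathcal{E}(B,c)$ with $B\succ 0$, so that $\mathrm{Vol}(\mathcal{E}(B,c)) = \omega_d \det(B)^{1/2}$, where $\omega_d$ is the volume of $B_2^d$. Since $K$ is bounded, some ball $\mathcal{E}(R^2 I, 0)$ encloses $K$; let $V_0$ be its volume. We may restrict the search to enclosing ellipsoids of volume at most $V_0$. Two bounds make this feasible set compact.
\begin{itemize}
\item \emph{Lower bound.} Because $K$ has nonempty interior, it contains a ball $\rho B_2^d + z$. Any enclosing ellipsoid must contain this ball, which forces $B \succeq \rho^2 I$.
\item \emph{Upper bound.} From $\det B \le (V_0/\omega_d)^2$ and every eigenvalue being at least $\rho^2$, each eigenvalue of $B$ is bounded above.
\item \emph{Centers.} The center $c$ lies in a bounded region, since the ellipsoid must contain $K$ and has bounded semi-axes.
\end{itemize}
The enclosing condition $\sup_{y\in K}(y-c)^\top B^{-1}(y-c)\le 1$ is closed in $(B,c)$ on the region $B\succeq\rho^2 I$. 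So the feasible set is compact, and the continuous function $\det B$ attains its minimum there.

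\textbf{Uniqueness.} This is the main step, and it is best handled by reparametrizing. Write each ellipsoid as $\{Ax+b : \|x\|_2\le 1\}$ with $A\succ 0$ symmetric. Equivalently, write it as $\{y : \|My - g\|_2 \le 1\}$ with $M = A^{-1}\succ 0$. In the $(M,g)$ form, the enclosing condition $\|My-g\|_2\le 1$ for all $y\in K$ is jointly convex in $(M,g)$, and the volume is $\omega_d\det(M)^{-1}$. So we are minimizing $-\log\det M$ over a convex set.

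Suppose $(M_1,g_1)$ and $(M_2,g_2)$ both attain the minimum volume. Their midpoint $\big(\tfrac{M_1+M_2}{2},\tfrac{g_1+g_2}{2}\big)$ is still feasible. Strict concavity of $\log\det$ on positive definite matrices gives
$$
\log\det\tfrac{M_1+M_2}{2} > \tfrac12\big(\log\det M_1 + \log\det M_2\big)
$$
whenever $M_1\neq M_2$. The midpoint would then have strictly smaller volume, which is a contradiction. Hence $M_1=M_2=:M$.

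It remains to show the centers coincide, which is the subtle point. Suppose $g_1\neq g_2$. For $y\in K$, set $u_i = My - g_i$. Each $u_i$ lies in the unit ball, and their midpoint is $My - \bar g$ with $\bar g = \tfrac{g_1+g_2}{2}$. By the parallelogram law,
$$
\|My-\bar g\|^2 \le 1 - \tfrac14\|g_1-g_2\|^2 ,
$$
uniformly over $y\in K$. Therefore the ellipsoid $\{y : \|My-\bar g\|\le r\}$ with $r<1$ still contains $K$. Rescaling it to $\{y : \|(M/r)y-\bar g/r\|\le 1\}$ gives an enclosing ellipsoid of strictly smaller volume, again a contradiction. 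So $g_1 = g_2$, and the minimum-volume enclosing ellipsoid is unique.
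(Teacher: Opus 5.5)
Your proof is correct. The paper gives no proof of this proposition; it only cites G\"uler and G\"urtuna (Theorems 2.2 and 2.7), with Gruber and John as further references, so there is no internal argument to compare against. What you supply is the standard self-contained route behind those citations:
\begin{itemize}
\item \emph{Existence} by compactness: the inscribed ball forces $B\succeq\rho^2 I$, and the volume cap then bounds the eigenvalues and the center.
\item \emph{Uniqueness of the shape matrix}: the convex reparametrization $\{y:\|My-g\|_2\le 1\}$ turns volume into $\det(M)^{-1}$, and strict concavity of $\log\det$ forces $M_1=M_2$.
\item \emph{Uniqueness of the center}: the parallelogram law shrinks the averaged ellipsoid, contradicting minimality.
\end{itemize}
One cosmetic point. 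In the last step, choose $r\in(0,1)$ with $r^2\ge 1-\tfrac14\|g_1-g_2\|_2^2$, so that rescaling by $1/r$ is legitimate. Positivity is available anyway: if that quantity were $0$, every $y\in K$ would satisfy $My=\bar g$, making $K$ a single point and contradicting nonempty interior.
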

Since $K_t^\mu$ is a compact convex body for all $t\geq 1$ and $\mu > 0$, the MVEE of it is well-defined. We let
\begin{equation*}
    E_t := \mathcal{E}(B_t,c_t) = \left\{y\in\mathbb{R}^d\ :\ (y-c_t)^\top B_{t}^{-1}(y - c_t) \leq 1\right\}
\end{equation*}
be the MVEE of $K_t^\mu$, the $\mu$-regularized error set at round $t$ .
\begin{lemma}[Bound support with weighted width]\label{thm: informal regret width}
    $\forall\,t\geq 1$,
we have       $  h(t) \leq W_t$    where $W_t := 2\sqrt{X_{t,a_t}^\top B_{t-1}X_{t,a_t}}$ is called the $X_{t,a_t}$-weighted width of $E_{t-1}$ with respect to $X_{t,a_t}$.
\end{lemma}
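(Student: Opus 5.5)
The plan is to chain three containments: $\Gamma_{t-1}\subseteq K_{t-1}^\mu$, then $K_{t-1}^\mu\subseteq E_{t-1}$, and finally the fact that the origin lies in $E_{t-1}$. Write $x := X_{t,a_t}$.

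\textbf{Step 1: pass from $\Gamma_{t-1}$ to $E_{t-1}$.} Since $0\in\mu B_2^d$, we have $\Gamma_{t-1}\subseteq \Gamma_{t-1}+\mu B_2^d = K_{t-1}^\mu \subseteq E_{t-1}$. Hence
\[
h(t)=\sup_{\gamma\in\Gamma_{t-1}} x^\top\gamma \;\le\; \sup_{y\in E_{t-1}} x^\top y .
\]
For $t=1$ we use $\Gamma_0=\Theta_0-\theta_*$ and the corresponding MVEE $E_0$ of $K_0^\mu$, so this indexing must be fixed with $t-1\ge 0$.

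\textbf{Step 2: compute the support function of the ellipsoid.} Write $y = c_{t-1}+B_{t-1}^{1/2}u$ with $\|u\|_2\le 1$. By Cauchy--Schwarz,
\[
\sup_{y\in E_{t-1}} x^\top y = x^\top c_{t-1} + \sqrt{x^\top B_{t-1}x}.
\]

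\textbf{Step 3: control the center term.} This is the only nontrivial point. Because $\theta_*\in\Theta_{t-1}$ (shown in the SME preliminaries), we have $0\in\Gamma_{t-1}\subseteq E_{t-1}$. Therefore $0 = c_{t-1}+B_{t-1}^{1/2}u_0$ for some $u_0$ with $\|u_0\|_2\le 1$, i.e. $-c_{t-1}\in E_{t-1}-c_{t-1}$. It follows that
\[
x^\top c_{t-1} = -x^\top B_{t-1}^{1/2}u_0 \le \|B_{t-1}^{1/2}x\|_2 = \sqrt{x^\top B_{t-1}x}.
\]
Combining the three steps gives
\[
h(t)\le 2\sqrt{x^\top B_{t-1}x}=W_t .
\]

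The main (mild) obstacle is precisely this centering issue: the MVEE need not be centered at the origin, and the factor $2$ arises from bounding the offset $c_{t-1}$ using that the origin, i.e. the true parameter, lies in the ellipsoid. The width interpretation, $\max_{E}x^\top y-\min_{E}x^\top y = 2\sqrt{x^\top B x}$, gives the same bound directly, since $\min_{E} x^\top y\le 0$.
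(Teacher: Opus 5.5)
Your proposal is correct and follows essentially the same route as the paper: both chain $\Gamma_{t-1}\subseteq K_{t-1}^\mu\subseteq E_{t-1}$, evaluate the ellipsoid's support function as $x^\top c_{t-1}\pm\sqrt{x^\top B_{t-1}x}$, and use $0\in E_{t-1}$ to get $\inf_{E_{t-1}}x^\top y\le 0$ (equivalently, your bound $x^\top c_{t-1}\le\sqrt{x^\top B_{t-1}x}$), which yields $h(t)\le W_t$. You also spell out why $0\in E_{t-1}$ (namely $\theta_*\in\Theta_{t-1}$), a step the paper leaves implicit.
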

Lemma \ref{thm: informal regret width} connects the regret support functions $\{h(t)\}_{t=1}^{T}$ with the dynamic of the membership set $\{\Theta_t\}_{t=1}^{T-1}$, whose full formalism and proof can be found in Appendix \ref{app: supp bound with width}. Furthermore, Lemma \ref{thm: informal regret width} motivates us to investigate the evolution of the membership sets.
\begin{remark}
    $W_t$ is called a weighted width because its value equals to the $||X_{t,a_t}||_2$-multiple of the length of the longest segment in $E_{t-1}$ that is parallel with $X_{t,a_t}$. If $X_{t,a_t}$ is normalized, then $W_t$ equals to the length of this longest parallel segment. In particular, if $X_{t,a_t} = 0$, we have $h(t) = W_t = 0$, Lemma \ref{thm: informal regret width} trivially holds. 
\end{remark}

\paragraph{Step 3: Leverage deep cuts with volumetric potential.} We then characterize a pattern of events on which the SME is greatly improved. For a fixed $\xi\in(0,1)$, we call the SME update at round $t$ a $\xi$\textit{-deep regularized cut} if one side of the new SME constraints cuts the previous MVEE $E_{t-1}$ at normalized depth no greater than $\frac{\xi}{d}$. Let $I_t(\xi)$ denote the indicator of the $\xi$-deep cut event. A formal definition of the deep cut events and $I_t(\xi)$ is in Appendix \ref{app: cut potential}.

Also, we introduce volumetric potential as the measurement of the SME's size.
\begin{definition}
    [Volumetric potential.] $\forall\,t\geq 1$, the volumetric potential of $\mathcal{E}(B_t,c_t)$ is defined by
    \begin{equation*}
        \Phi_t := \log\vol(B_2^d) + \frac{1}{2}\log\det B_t
    \end{equation*}
\end{definition}
\begin{remark}
    The volumetric potential of an ellipsoid equals to the logarithmic of its Lebesgue volume.
\end{remark}
The following theorem connects the deep-cut indicator with the volumetric potential sequence.
\begin{lemma}[Bound deep cuts with volumetric potential]\label{thm: informal deep cut vol potential}
    For any fixed $\xi\in(0,1)$, $\forall\,t\geq 2$,
    \begin{equation*}
        \Phi_{t-1} - \Phi_t \geq \Omega_{d,\xi}I_t(\xi),
    \end{equation*}
    where $\Omega_{d,\xi} := -\log\left[\frac{d+\xi}{d+1}\left(\frac{d^2-\xi^2}{d^2-1}\right)^{\frac{d-1}{2}}\right] \geq \frac{1-\xi^2}{2(d+1)}.$ Furthermore,
    \begin{equation*}
        \sum_{t=1}^T I_t(\xi) \leq \frac{d}{\Omega_{d,\xi}}\log\left(\frac{2S\sqrt{d} + \mu}{\mu}\right).
    \end{equation*}
\end{lemma}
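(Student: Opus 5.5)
The plan combines the classical deep-cut ellipsoid volume bound with monotonicity of the regularized error sets, then telescopes the volumetric potential.

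\emph{Step 1: per-round volume decrease.} Since $\Theta_t\subseteq\Theta_{t-1}$, we have $\Gamma_t\subseteq\Gamma_{t-1}$ and hence $K_t^\mu\subseteq K_{t-1}^\mu$. Suppose a $\xi$-deep regularized cut occurs at round $t$. Then $K_t^\mu$ lies in $E_{t-1}$ intersected with a halfspace $\{y: a^\top y\le a^\top c_{t-1}+\alpha\sqrt{a^\top B_{t-1}a}\}$ with normalized depth $\alpha\le \xi/d$, where $a=\pm X_{t,a_t}$. The $\mu$-regularization enlarges the cut slab by $\mu\|a\|$; I will assume the formal definition of $I_t(\xi)$ in the appendix already accounts for this.

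For such a half-ellipsoid, the classical deep-cut ellipsoid-method formula (Bland–Goldfarb–Todd) gives an enclosing ellipsoid $E'$ with
\[
\frac{\vol(E')}{\vol(E_{t-1})}=\frac{d(1+\alpha)}{d+1}\left(\frac{d^2(1-\alpha^2)}{d^2-1}\right)^{\frac{d-1}{2}}
\]
when the cut is written with depth $-\alpha\in(-1/d,\,\cdot\,]$. In the paper's convention this gives the factor $\frac{d+\xi}{d+1}\bigl(\frac{d^2-\xi^2}{d^2-1}\bigr)^{(d-1)/2}$. This is checked by an affine change of variables to the unit ball, after which the formula is monotone in the depth, so the worst case is depth exactly $\xi/d$. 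Because $E_t$ is the minimum-volume ellipsoid enclosing $K_t^\mu\subseteq E'$, we get $\vol(E_t)\le\vol(E')$. Taking logarithms yields $\Phi_{t-1}-\Phi_t\ge\Omega_{d,\xi}$ when $I_t(\xi)=1$. When $I_t(\xi)=0$, monotonicity $K_t^\mu\subseteq K_{t-1}^\mu$ gives $\vol(E_t)\le\vol(E_{t-1})$, i.e.\ $\Phi_{t-1}-\Phi_t\ge0$.

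\emph{Step 2: the lower bound on $\Omega_{d,\xi}$.} Write
\[
\Omega_{d,\xi}=-\log\frac{d+\xi}{d+1}-\frac{d-1}{2}\log\Bigl(1+\frac{1-\xi^2}{d^2-1}\Bigr).
\]
Use $-\log(1-x)\ge x+x^2/2$ with $x=\frac{1-\xi}{d+1}$, and $\log(1+y)\le y$. This gives
\[
\Omega_{d,\xi}\ \ge\ \frac{1-\xi}{d+1}+\frac{(1-\xi)^2}{2(d+1)^2}-\frac{1-\xi^2}{2(d+1)}.
\]
The first and third terms combine to $\frac{(1-\xi)^2}{2(d+1)}$, which is positive but not yet the claimed $\frac{1-\xi^2}{2(d+1)}$. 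This mismatch means the crude inequality is too lossy, and I expect this to be the fiddly step. I would first try a sharper Taylor expansion in $1/d$. Failing that, I would treat it as a function of $\xi$ on $[0,1]$ vanishing at $\xi=1$ and compare derivatives, using $d\ge2$.

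\emph{Step 3: telescoping.} Summing Step 1 over rounds gives
\[
\Omega_{d,\xi}\sum_t I_t(\xi)\le\Phi_{0}-\Phi_T,
\]
with round-1 issues handled via $E_0$. For the upper end, $K_0^\mu\subseteq [-2S,2S]^d+\mu B_2^d\subseteq(2S\sqrt d+\mu)B_2^d$. The ball is an enclosing ellipsoid, so $\Phi_0\le\log\vol(B_2^d)+d\log(2S\sqrt d+\mu)$. For the lower end, $K_T^\mu\supseteq\mu B_2^d$ because $0\in\Gamma_T$, so every enclosing ellipsoid has volume at least $\mu^d\vol(B_2^d)$, giving $\Phi_T\ge\log\vol(B_2^d)+d\log\mu$. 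Subtracting gives $\Phi_0-\Phi_T\le d\log\frac{2S\sqrt d+\mu}{\mu}$, and dividing by $\Omega_{d,\xi}$ proves the sum bound.

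The main obstacle is Step 1's precise matching of the regularized deep-cut definition, including the $\mu$-inflation, to the depth parameter of the classical formula.
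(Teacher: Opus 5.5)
Your Steps 1 and 3 match the paper's proof. The paper also maps $E_{t-1}$ to the unit ball and writes down the same Bland--Goldfarb--Todd enclosing ellipsoid for the cut ball. It uses monotonicity of $R_d$ on $[-1,1/d)$ and minimality of the MVEE, and builds the $\mu L$ inflation into $q_t^{\pm}$ exactly as you anticipated. It then telescopes between $\mu B_2^d\subseteq K_T^\mu$ and $K_0^\mu\subseteq(2S\sqrt d+\mu)B_2^d$. One small detail you omit is that the depth must be $\ge -1$ for the cut formula to apply. The paper gets this from $|x_t^\top c_{t-1}|\le\rho_t$ (Cauchy--Schwarz, since $0\in E_{t-1}$); it also follows because $K_t^\mu\subseteq E_{t-1}\cap H$ is nonempty.

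The real problem is Step 2: the target $\Omega_{d,\xi}\ge\frac{1-\xi^2}{2(d+1)}$ is false, so no sharper expansion can reach it. Since $\partial_q\log R_d(q)=\frac{1-dq}{1-q^2}$ and $R_d(1/d)=1$, you have exactly
\[
\Omega_{d,\xi}=-\log R_d(\xi/d)=\int_\xi^1\frac{d(1-s)}{d^2-s^2}\,ds,\qquad\text{hence}\qquad \frac{(1-\xi)^2}{2d}\le\Omega_{d,\xi}\le\frac{d(1-\xi)^2}{2(d^2-1)} .
\]
So $\Omega_{d,\xi}$ vanishes quadratically at $\xi=1$, while $\frac{1-\xi^2}{2(d+1)}$ vanishes only linearly. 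The claimed bound therefore fails at least whenever $\xi>\frac{1}{2d-1}$. For instance, $d=2$, $\xi=\frac12$ gives $\Omega_{2,1/2}=-\log\bigl(\tfrac56\sqrt{5/4}\bigr)\approx0.071<0.125$. Your proposed derivative comparison at $\xi=1$ would have revealed exactly this. Your $1/d$ expansion would likewise show that your crude bound is already leading-order tight.

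The $1-\xi^2$ in the displayed statement is a typo. The paper's own proof (Corollary~\ref{cor: deep cut volume}) establishes $\Omega_{d,\xi}\ge\frac{(1-\xi)^2}{2(d+1)}$ with the same two inequalities you used (only $-\log(1-x)\ge x$ is needed). That is the form producing the $\frac{16d^2(d+1)}{c_0\xi(1-\xi)^2}$ term in Theorem~\ref{thm: reg bound}. So you should stop where your computation lands and state the $(1-\xi)^2$ version. The summed inequality in the second half of the lemma is written in terms of $\Omega_{d,\xi}$ itself and is unaffected.
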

Lemma \ref{thm: informal deep cut vol potential} bridges the SME dynamic with the volumetric potential $\Phi_t$. When a $\xi$-deep cut occurs, the volumetric potential decrease by at least $\Omega_{d,\xi}$. It also suggests an upper bound on the total counts of $\xi$-deep cut events. The full formalism and proof can be found in Appendices \ref{app: vol pot} and \ref{app: cut potential}. Intuitively, we next aim to bridge the weighted width $W_t$ with the occurrence of deep cuts.

\paragraph{Step 4: Bridge weighted width and deep cuts.} Define
$
    w_\mu:=\frac{4d\mu L}{\xi}$ and $ w_\xi := \frac{4d\epsilon_0}{\xi}
$, where $w_\mu$ determines the threshold above which the weighted width $W_t$ dominates the regularization error, and $w_\xi$ controls the activation of the boundary-visiting condition in Assumption \ref{ass: boundary-visiting}. Using these two thresholds, we introduce a theorem that bounds the weighted width with high probability.
\begin{lemma}
    [Bridge weighted width and deep cuts.]\label{thm: informal width deep cut}
    $\forall\,\delta\in (0,1)$, with probability at least $1-\delta$,
    \begin{equation*}
        \sum_{t=1}^T\mathbbm{1}\left\{W_t\geq w_\mu\right\}\min\left\{W_t,w_\xi\right\} \leq \frac{8d^2}{\xi c_0 \Omega_{d,\xi}}\log\left(\frac{2S\sqrt{d}+\mu}{\mu}\right) + \frac{32d}{3\xi c_0}\log\frac{1}{\delta}.
    \end{equation*}
\end{lemma}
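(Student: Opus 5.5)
The plan is to show that, whenever the weighted width is large, the SME update at round $t$ is a $\xi$-deep regularized cut with conditional probability at least proportional to $\min\{W_t,w_\xi\}$. A martingale concentration argument then turns the sum of these conditional probabilities into the realized number of deep cuts. Lemma \ref{thm: informal deep cut vol potential} bounds that number.

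\textbf{Step 1: deep-cut geometry.} Fix $t$ with $W_t\geq w_\mu$, and write $x=X_{t,a_t}$ and $\gamma^\pm$ for the extremes of $x^\top\gamma$ over $E_{t-1}$. In error coordinates the new constraint is $-\eta_{\max}\leq \eta_t - x^\top\gamma\leq \eta_{\max}$, i.e. the slab $\eta_t-\eta_{\max}\leq x^\top\gamma\leq \eta_t+\eta_{\max}$. The origin lies in $\Gamma_{t-1}\subseteq E_{t-1}$, so the slab's upper face $x^\top\gamma\le \eta_t+\eta_{\max}$ cuts $E_{t-1}$ whenever $\eta_t+\eta_{\max}$ is small.

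Measured along $x$, the normalized depth of that cut relative to the center is $\big(x^\top c_{t-1}-(\eta_t+\eta_{\max})\big)/(W_t/2)$, up to sign convention. Because $0\in E_{t-1}$, the cut lies inside $E_{t-1}$ at distance at most $(\eta_t+\eta_{\max})$ from the origin's level. I then need to handle the regularization: the cut is applied to $\Gamma_t$, but $E_t$ is the MVEE of $\Gamma_t+\mu B_2^d$. Intersecting $K^\mu_{t-1}$ with the half-space shifted by $\mu L$ contains $K_t^\mu$. This is where $W_t\ge w_\mu=4d\mu L/\xi$ enters: it makes the $\mu L$ shift at most a $\xi/(4d)$ fraction of the half-width.

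One of the two faces (upper if $x^\top c_{t-1}\ge 0$, else lower) cuts beyond the center. I would show that face is a $\xi$-deep cut as soon as its slack, $\eta_t+\eta_{\max}$ or $\eta_{\max}-\eta_t$, is at most $\xi W_t/(4d)$. The threshold $\xi/(4d)$ is chosen so that the slack plus the $\mu L$ term stays within normalized depth $\xi/d$.

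\textbf{Step 2: probability.} Conditionally on $\mathcal F_{t-1}$ and the contexts at round $t$, the quantities $x$, $E_{t-1}$ and the relevant face are fixed, and $\eta_t$ is independent of them. Assumption \ref{ass: boundary-visiting} with $\epsilon=\min\{\xi W_t/(4d),\epsilon_0\}$ then gives
\[\mathbb P(I_t(\xi)=1\mid\mathcal F_{t-1})\geq c_0\min\{\xi W_t/(4d),\epsilon_0\}=\tfrac{c_0\xi}{4d}\min\{W_t,w_\xi\}.\]
Set $p_t:=\mathbbm 1\{W_t\ge w_\mu\}\frac{c_0\xi}{4d}\min\{W_t,w_\xi\}$. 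Then $p_t\le \mathbb E[I_t\mid\mathcal F_{t-1}]$ and $p_t\le 1$; the latter can be arranged by capping, since $c_0\epsilon_0\le1$ follows from $q(\epsilon_0)\le 1$.

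\textbf{Step 3: concentration.} I would apply Freedman's inequality, or a Bernstein-type self-normalized bound for Bernoulli sums, to the martingale differences $p_t-\mathbbm 1\{W_t\ge w_\mu\}I_t$. The conditional variance is at most $\mathbb E[I_t\mid\mathcal F_{t-1}]$, which gives a bound of the form $\sum p_t\le 2\sum I_t+\frac{8}{3}\log\frac1\delta$ with probability $1-\delta$. The specific constant $32d/(3\xi c_0)$ suggests exactly this $\frac{8}{3}\log\frac{1}{\delta}$ multiplied by $4d/(\xi c_0)$. Combining with $\sum_t I_t\le \frac{d}{\Omega_{d,\xi}}\log\frac{2S\sqrt d+\mu}{\mu}$ from Lemma \ref{thm: informal deep cut vol potential} and multiplying by $4d/(c_0\xi)$ yields
\[\frac{8d^2}{\xi c_0\Omega_{d,\xi}}\log\frac{2S\sqrt d+\mu}{\mu}+\frac{32d}{3\xi c_0}\log\frac1\delta.\]

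\textbf{Main obstacle.} I expect the hardest step to be Step 1. It requires showing rigorously that a cut of the raw set $\Gamma_{t-1}$ at small slack translates into a cut of the MVEE of the \emph{regularized} set at normalized depth $\le \xi/d$. It also requires picking the correct face using the position of $c_{t-1}$ relative to $0$, and choosing the constants so that the $\mu L$ perturbation is absorbed by the condition $W_t\ge w_\mu$.
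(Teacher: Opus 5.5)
Your proposal is correct and follows the paper's proof: choosing the face by the sign of $x^\top c_{t-1}$ is the same as the paper's observation that $u_t+v_t=\xi W_t/d-2\mu L\ge \xi W_t/(2d)$, hence $\max\{u_t,v_t\}\ge \xi W_t/(4d)$, and your concentration step is Proposition~\ref{prop: terminal} applied to $j_t=\mathbbm{1}\{W_t\ge w_\mu\}I_t(\xi)$, combined with Lemma~\ref{lm: bound of Phi} and multiplied by $4d/(\xi c_0)$. One small fix: apply the Freedman bound to the true conditional mean $\mathbb{E}[j_t\mid\mathcal{F}_{t-1}]$, whose centred differences do form a martingale, and only afterwards lower-bound it by your $p_t$, because $p_t-j_t$ with $p_t$ merely a lower bound is only a supermartingale difference.
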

Lemma \ref{thm: informal width deep cut} connects the deep-cut bound with the cumulative weighted width. On rounds where $W_t$ is above the threshold $w_\mu$, the previous MVEE is sufficiently wide along the direction of the played context $X_{t,a_t}$. Hence, the boundary-visiting condition leads to a greater likelihood of a deep-cut event. Combining Lemmas \ref{thm: informal width deep cut} and \ref{thm: informal deep cut vol potential}, one can generalize that large weighted widths cannot accumulate without generating a large number of deep cuts. A martingale concentration argument can convert this probabilistic relation to a high-probability upper bound guarantee, whose full formalism and proof can be found in Appendices \ref{app: width with cuts} and \ref{app: martinagle}.

\paragraph{Step 5: Bound the cumulative weighted width.} By separating rounds $t\in [T]$ into two groups: one with $W_t < w_\mu$ and the other with $W_t\geq w_\mu$, we extend Lemma \ref{thm: informal width deep cut} to a general high-probability upper bound on the cumulative weighted width.
\begin{lemma}
    [Bound the cumulative weighted width]\label{lm: informal weighted width} $\forall\,\delta\in (0,1)$, with probability at least $1-\delta$,
    \begin{equation*}
        \sum_{t=1}^T W_t \leq \frac{4d\mu LT}{\xi} + \left(1 + \frac{\xi L(2S\sqrt{d}+\mu)}{2d\epsilon_0}\right)\left[\frac{16d^2(d+1)}{\xi c_0(1-\xi)^2}\log\left(\frac{2S\sqrt{d}+\mu}{\mu}\right) + \frac{32d}{3\xi c_0}\log\frac{1}{\delta}\right].
    \end{equation*}
\end{lemma}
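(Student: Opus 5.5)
We prove Lemma \ref{lm: informal weighted width} by splitting the rounds $t\in[T]$ into three groups according to the size of $W_t$ relative to the thresholds $w_\mu$ and $w_\xi$, and then invoking Lemma \ref{thm: informal width deep cut} together with the lower bound on $\Omega_{d,\xi}$ from Lemma \ref{thm: informal deep cut vol potential}. Write
\[
\sum_{t=1}^T W_t = \sum_{t} W_t\mathbbm{1}\{W_t<w_\mu\} + \sum_t W_t\mathbbm{1}\{w_\mu\le W_t<w_\xi\} + \sum_t W_t \mathbbm{1}\{W_t\ge \max(w_\mu,w_\xi)\}.
\]
The first sum is trivially at most $w_\mu T = \frac{4d\mu L T}{\xi}$, which is the first term of the bound. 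On the second group, $W_t=\min\{W_t,w_\xi\}$.

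The third group needs a uniform upper bound on $W_t$. Since $E_{t-1}$ is the MVEE of $K_{t-1}^\mu\subseteq \Gamma_0+\mu B_2^d$, and $\Gamma_0=[-S,S]^d-\theta_*$ has diameter at most $2S\sqrt d$, John's theorem gives $E_{t-1}\subseteq c_{t-1}+d(K_{t-1}^\mu-c_{t-1})$. Hence the width of $E_{t-1}$ in any unit direction is at most $d(2S\sqrt d+2\mu)$, and $W_t\le L\,d(2S\sqrt d+2\mu)$. I expect the constants to need care here: the target factor is $\frac{\xi L(2S\sqrt d+\mu)}{2d\epsilon_0}=\frac{2L(2S\sqrt d+\mu)}{w_\xi}$, so the bound to aim for is $W_t\le 2L(2S\sqrt d+\mu)$. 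This likely follows from $h(t)$-type arguments rather than John's crude factor $d$, or perhaps by working with $\min\{W_t,\,\text{diam bound}\}$; using the diameter of $K^\mu$ directly, $2S\sqrt d+2\mu$, is the intended route. Granting such a bound $\bar W$, each round in the third group satisfies
\[
W_t\le \frac{\bar W}{w_\xi}\,w_\xi = \frac{\bar W}{w_\xi}\min\{W_t,w_\xi\}.
\]
Combining the groups,
\[
\sum_{t} W_t \le w_\mu T + \Bigl(1+\tfrac{\bar W}{w_\xi}\Bigr)\sum_t \mathbbm{1}\{W_t\ge w_\mu\}\min\{W_t,w_\xi\}.
\]
The bracketed factor matches $1+\frac{\xi L(2S\sqrt d+\mu)}{2d\epsilon_0}$ once $\bar W= 2L(2S\sqrt d+\mu)$.

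Finally, apply Lemma \ref{thm: informal width deep cut} with probability $1-\delta$. Then substitute $1/\Omega_{d,\xi}\le \frac{2(d+1)}{1-\xi^2}\le \frac{2(d+1)}{(1-\xi)^2}$, which turns $\frac{8d^2}{\xi c_0\Omega_{d,\xi}}$ into $\frac{16d^2(d+1)}{\xi c_0(1-\xi)^2}$ and yields exactly the stated bound.

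The main obstacle is the uniform width bound $\bar W$, specifically obtaining the constant $2L(2S\sqrt d+\mu)$ rather than something with an extra factor of $d$ from John's theorem. If the constant cannot be matched, the structure of the argument is unchanged and only the prefactor degrades.
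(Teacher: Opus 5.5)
\textbf{Verdict: there is a genuine gap, and it is the step you flagged.} Your three-group split, the use of Lemma~\ref{thm: informal width deep cut}, and the final substitution are exactly the paper's proof. The open step is the uniform cap $W_t\le 2L(2S\sqrt d+\mu)$, and it cannot be closed the way you hope. The paper obtains it by precisely the diameter route you suggest: $K_{t-1}^\mu\subseteq K_0^\mu\subseteq B(0,2S\sqrt d+\mu)$, hence the cap. But $W_t$ is a width of the MVEE $E_{t-1}$, not of $K_{t-1}^\mu$, and an MVEE can stick out of a ball that contains the set.

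Take a cylinder $[-\ell,\ell]\times\epsilon B_2^{d-1}$. Symmetry forces its MVEE to be $\{t^2/a^2+\|y\|_2^2/b^2\le 1\}$, and containment means $\ell^2/a^2+\epsilon^2/b^2\le 1$. Minimizing $ab^{d-1}$ under this constraint gives axial semi-axis $a=\sqrt d\,\ell$ for every $\epsilon>0$, and the same holds for any cross-section.

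This configuration is realizable in the bandit. Take $\theta_*=0$ and give all arms contexts spanning $\mathbf 1^\perp$ for long enough that, by Assumption~\ref{ass: boundary-visiting}, $\Gamma_{t-1}$ collapses to a thin neighbourhood of the segment $\{\lambda\mathbf 1:|\lambda|\le S\}$, which has half-length $S\sqrt d$. Then present $x_t=L\mathbf 1/\sqrt d$. As the thickness and $\mu$ go to $0$, $W_t\to 2L\sqrt d\cdot S\sqrt d=2dLS$. This exceeds $2L(2S\sqrt d+\mu)$ for $d\ge 5$ and small $\mu$. So the uniform cap is false, and no argument, $h(t)$-based or otherwise, will produce it.

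\textbf{What survives.} For $\sum_t W_t$ itself, the valid cap is your John's-theorem bound $\bar W=dL(2S\sqrt d+2\mu)$. Plugging it into your otherwise correct argument gives the lemma with prefactor $1+\frac{\xi L(S\sqrt d+\mu)}{2\epsilon_0}$ instead of $1+\frac{\xi L(2S\sqrt d+\mu)}{2d\epsilon_0}$. That costs an extra factor of order $\sqrt d$ but keeps the $\log T$ order. With the stated constant, the lemma is established neither by your proposal nor by the paper's argument.

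Your other idea, capping $\min\{W_t,\cdot\}$ by a diameter bound, is the right one for what the lemma is used for. Since $0\le h(t)\le\min\{W_t,2LS\sqrt d\}$, running your three-group argument on $\sum_t h(t)$ gives prefactor $1+\frac{\xi LS}{2\sqrt d\,\epsilon_0}$, which is below the stated one. So this step of the regret bound in Theorem~\ref{thm: reg bound} goes through with the stated constant, even though the claim about $\sum_t W_t$ does not.

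\textbf{A minor point.} Your step $1/\Omega_{d,\xi}\le\frac{2(d+1)}{1-\xi^2}$ uses the main-text claim $\Omega_{d,\xi}\ge\frac{1-\xi^2}{2(d+1)}$. The appendix does not prove this, and it fails for $\xi$ near $1$: for $d=2$, $\xi=0.9$ one gets $\Omega_{d,\xi}\approx 0.0032<0.19/6$. Use the appendix bound $\Omega_{d,\xi}\ge\frac{(1-\xi)^2}{2(d+1)}$ directly; it yields the same final constant.
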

For the full formalism and proof of Lemma \ref{lm: informal weighted width}, see Appendix \ref{app: assembly}. 

Finally, by combining Lemma \ref{lm: informal weighted width} with Lemma \ref{thm: informal regret width}, we have proved the high-probability regret upper bound in Theorem \ref{thm: reg bound}.

\section{Numerical Experiments}\label{sec: num}
We numerically test our algorithm SME-OFU on a synthetic setting adopted  in \citep{guo2019adalinucb}. We consider OFUL in \citep{abbasi2011improved} as our baseline. The details of our experiment settings are provided in Appendix \ref{append:num}.

\begin{figure*}[htp]
    \centering
    \begin{subfigure}[t]{0.5\textwidth}
        \centering
        \includegraphics[height=2.0in]{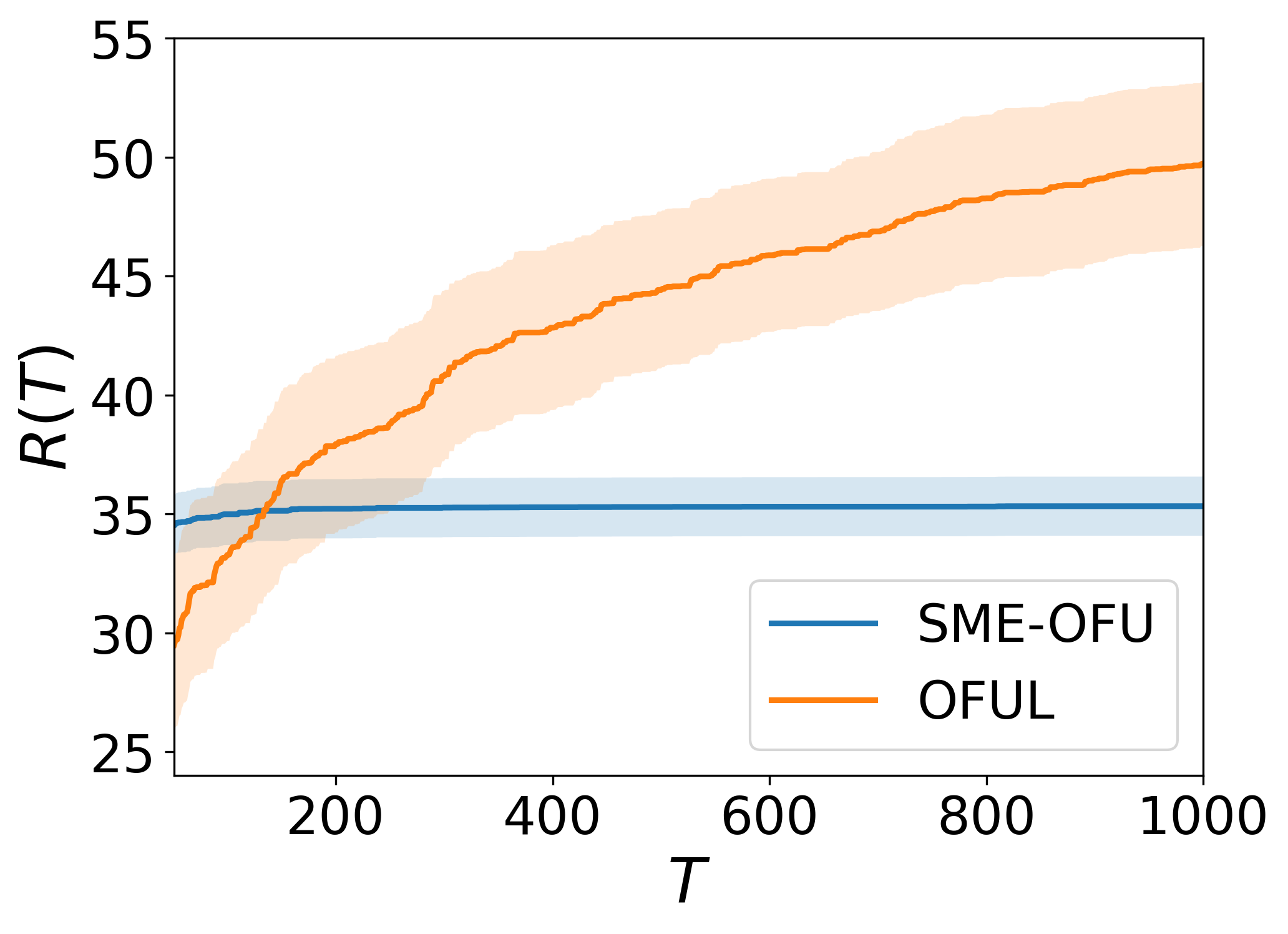}
        \caption{Uniform noise.}
        \label{subplot: uniform}
    \end{subfigure}%
    \begin{subfigure}[t]{0.5\textwidth}
        \centering
        \includegraphics[height=2.0in]{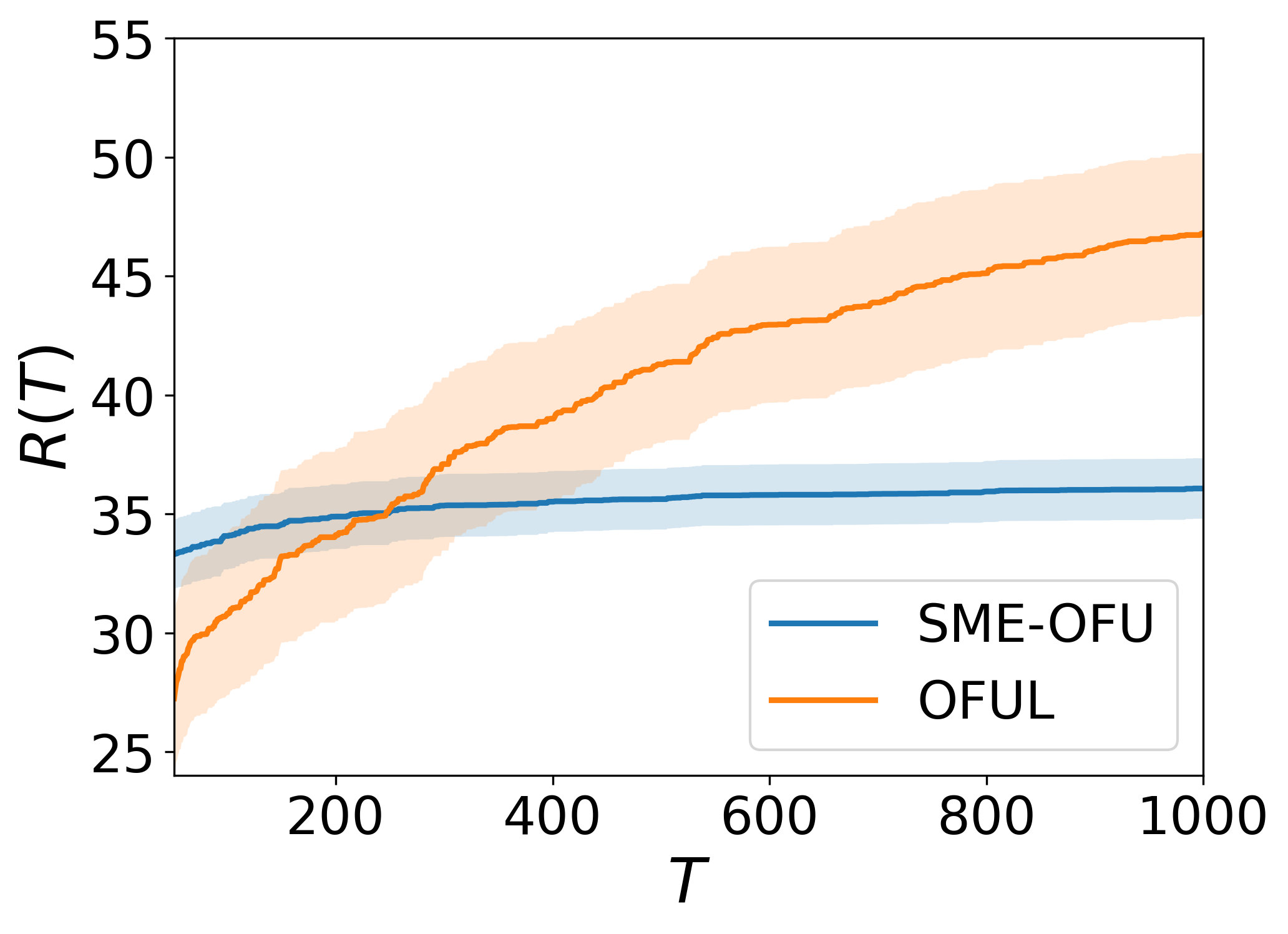}
        \caption{Truncated Gaussian noise.}
        \label{subplot: truncated gaussian}
    \end{subfigure}
    \caption{The average regrets and standard deviations of SME-OFU and OFUL under uniform noises (Figure \eqref{subplot: uniform}) and truncated Gaussian noises (Figure \eqref{subplot: truncated gaussian}).} 
    \label{fig: synthetic}
\end{figure*}

Figure \ref{fig: synthetic} shows the regret dynamics of SME-OFU and OFUL and their empirical variances under uniform noise (Figure \eqref{subplot: uniform}) and truncated Gaussian noise (Figure \eqref{subplot: truncated gaussian}).
It can be observed from Figure \ref{fig: synthetic} that under both uniform noise and truncated Gaussian noise, after $50$ rounds of exploitation, the increment in the regret of SME-OFU is minor. Meanwhile, the cumulative regret $R(T)$ of SME-OFU, compared to its OFUL counterpart, maintains a lower level near $35$ even for a relatively greater horizon. The difference in the performance of SME-OFU and OFUL under the synthetic scenario is consistent with the difference in their theoretical regret upper bounds, i.e., $\mathcal{O}\left(\log(T)\right)$ for SME-OFU, and $\tilde{O}\left(\sqrt{T}\right)$ for OFUL.
Further, SME-OFU enjoys a smaller standard deviation than OFUL. This is consistent with our algorithmic intuition in Section \ref{sec: alg} that SME provides almost-sure uncertainty sets, whereas OFUL utilizes high-probability confidence regions  and naturally involves more uncertainty.

\section{Concluding Remarks}\label{sec:conclusion}
\textbf{Conclusion.}
We study stochastic linear contextual bandits with bounded  noise and propose a novel SME-OFU algorithm based on set-membership estimation and the optimism-in-the-face-of-uncertainty principle. By  leveraging the bounded-noise assumption, we establish an improved  regret bound $O(\log(T))$ and observe better numerical performance of SME-OFU compared with OFUL.

\textbf{Limitations and future directions.}
This work relies on a tight noise bound for theoretical analysis. Further, the vanilla SME implementation suffers from increasing computational complexity because the uncertainty set is characterized by a growing polytope. In addition, the regret bound has a relatively strong dependence on the dimension $d$.
An important future direction is to extend SME-based bandit algorithms to settings with unknown noise bounds by integrating adaptive SME variants. It is also interesting to investigate computationally efficient approximations of SME, as well as extensions to adversarial, corrupted, misspecified, nonlinear, and/or neural contextual bandit settings. Another promising direction is to generalize SME-based uncertainty quantification to reinforcement learning and MDP problems. Finally, it would be interesting to investigate fair and privacy-aware SME-based bandit algorithms. 

\textbf{Societal Impact.}
This is a theoretical work that focuses on regret bound analysis. A potential positive societal impact is improving efficiency of the applications of contextual bandits, such as recommendation and mobile health. Potential risks include assumptions not satisfied in real world scenarios, bias and  privacy concerns in personalized decision-making systems, etc.
\clearpage
\bibliographystyle{plainnat}
\bibliography{bibliography.bib}

\newpage

\appendix

\section*{Appendix}

\section{Roadmap}
\begin{itemize}
    \item Appendix~\ref{app sec: notations} summarizes the notation used throughout the appendices.

    \item Appendix~\ref{app sec: thm1} provides the detailed proofs of Theorem~\ref{thm: reg bound} and Corollary~\ref{cor: rate}.

    \item Appendix~\ref{app: supportive} contains proofs of the supporting claims used in Appendix~\ref{app sec: thm1}.

    \item Appendix~\ref{append:num} presents the detailed setup of the numerical experiments in Section~\ref{sec: num}.
\end{itemize}

\section{Notations for Appendix}\label{app sec: notations}
Since we have no assumptions on the context vectors except their boundedness, we abuse the notation by denoting
\begin{equation*}
    x_t := X_{t,a_t}.
\end{equation*}
We construct the error set $\Gamma_t$ from the membership set $\Theta_t$:
\begin{equation}
    \label{eq: error set}
    \forall\, t\geq 1,\ \Gamma_t = \Theta_t - \theta_*.
\end{equation}
For all $t\geq 1$, denote
\begin{align*}
    b_t = \eta_{\max} - \eta_t,\ b'_t = \eta_{\max}+\eta_t.
\end{align*}
Then the error sets $\{\Gamma_t\}_{t\geq 0}$ can equivalently be represented by
\begin{equation*}
    \Gamma_0 = [-S,S]^d-\theta_*\subseteq [-2S, 2S]^d,\quad \Gamma_t = \Gamma_{t-1}\cap\{\gamma\ :\ -b_t\leq x_t^\top \gamma \leq b'_t\}.
\end{equation*}
Define the support function at round $t$ by
\begin{equation*}
    h(t) := \sup_{\gamma\in\Gamma_{t-1}}\gamma^\top x_t.
\end{equation*}
For any pre-selected $\mu >0$, define the \textit{regularized error set}
\begin{equation}
    \label{eq: reg err set}
    K_t^\mu := \Gamma_t + \mu B_2^d = \left\{\gamma + \mu y\ :\ \gamma\in\Gamma_t,\ y\in B_2^d\right\},
\end{equation}
where $B_2^d$ denotes the $\ell_2$ unit ball in $\mathbb{R}^d$. Here the regularization constant $\mu$ exists for the convenience of analysis, and does not appear in the implementation of the SME-OFU algorithm \ref{alg:SME-OFUL}.

Let
\begin{equation}
\label{eq: vol potential}
    E_t = \mathcal{E}(B_t,c_t) := \left\{\gamma\in\mathbb{R}^d\ :\ (\gamma-c_t)^\top B_{t}^{-1}(\gamma-c_t)\leq 1\right\}
\end{equation}
be the MVEE of $K_t^\mu$, where $B_t\succ 0$. Define the volumetric potential of the MVEE
\begin{equation*}
    \Phi_t := \log \vol(E_t) = \log\vol(B_2^d) + \frac{1}{2}\log\det B_t,
\end{equation*}
where $B_2^d$ denotes the $\ell_2$ unit ball in $\mathbb{R}^d$.

The $x_t$-weighted projection width of $E_{t-1}$ onto the vector $x_t$ is
\begin{equation*}
    W_t := 2\sqrt{x_t^\top B_{t-1}x_t}.
\end{equation*}

Define the filtration
\begin{equation*}
    \mathcal{F}_{t-1} := \sigma\left\{\eta_1,\cdots,\eta_{t-1};\ X_{1,s},\cdots,X_{K,s},\ \forall\,s\leq t\right\}.
\end{equation*}

\section{Proof of Theorem \ref{thm: reg bound} and Corollary \ref{cor: rate}}\label{app sec: thm1}
\subsection{Bound support functions with regularized MVEE weighted width}\label{app: supp bound with width}
With standard algebraic manipulation and utilization of the OFU principle, we have derived the fact that $\forall\, T\geq 1$, the following happens almost surely:
\begin{equation*}
    R(T) \leq \sum_{t=1}^T h(t),
\end{equation*}
the proof of which can be found in the beginning of Section \ref{sec: proof sketch}.
\begin{lemma}[Bound support with weighted width]\label{lm: supp width}
    $\forall\, t\geq 1$, $h(t)\leq W_t$.
\end{lemma}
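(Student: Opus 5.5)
The plan is a direct containment argument followed by the elementary support-function formula for an ellipsoid. By definition, $h(t) = \sup_{\gamma\in\Gamma_{t-1}}\gamma^\top x_t$. Since $0\in\mu B_2^d$, we have $\Gamma_{t-1}\subseteq \Gamma_{t-1}+\mu B_2^d = K_{t-1}^\mu\subseteq E_{t-1}$. The last inclusion holds because $E_{t-1}$ is the MVEE of $K_{t-1}^\mu$; it is well defined by the quoted existence and uniqueness proposition, since $K_{t-1}^\mu$ is a compact convex body. For $t=1$ we use $\Gamma_0$ and $E_0$, the MVEE of $K_0^\mu$, in the same way. Hence $h(t)\le \sup_{\gamma\in E_{t-1}}\gamma^\top x_t$.

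Next we compute the support function of $E_{t-1}=\mathcal{E}(B_{t-1},c_{t-1})$. Write $\gamma = c_{t-1}+B_{t-1}^{1/2}u$ with $\|u\|_2\le1$. Cauchy--Schwarz then gives
\[
\sup_{\gamma\in E_{t-1}} \gamma^\top x_t = c_{t-1}^\top x_t + \sqrt{x_t^\top B_{t-1}x_t}.
\]
It therefore remains to show $c_{t-1}^\top x_t\le \sqrt{x_t^\top B_{t-1}x_t}$, which yields $h(t)\le 2\sqrt{x_t^\top B_{t-1}x_t}=W_t$.

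This last inequality is the only nontrivial step. The key observation is that $0\in\Gamma_{t-1}$, because $\theta_*\in\Theta_{t-1}$ almost surely, as verified after \eqref{eq: membership set}. Consequently $0\in K_{t-1}^\mu\subseteq E_{t-1}$, so the origin lies in the ellipsoid. Then
\[
\inf_{\gamma\in E_{t-1}}\gamma^\top x_t = c_{t-1}^\top x_t - \sqrt{x_t^\top B_{t-1}x_t}\le 0^\top x_t = 0,
\]
which is exactly $c_{t-1}^\top x_t\le\sqrt{x_t^\top B_{t-1}x_t}$. Combining the two displays gives $h(t)\le W_t$. Equivalently, the support value is at most the full width of the ellipsoid in direction $x_t$ whenever the origin lies inside it.

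The degenerate case $x_t=0$ is trivial, since both sides vanish. I expect no real obstacle here; the only care needed is recording that $0\in\Gamma_{t-1}$ deterministically, so the bound holds almost surely.
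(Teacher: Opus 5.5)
Your proposal is correct and follows the paper's proof essentially step for step. Both use the containment $\Gamma_{t-1}\subseteq K_{t-1}^\mu\subseteq E_{t-1}$, the support-function formula $\sup_{\gamma\in E_{t-1}}x_t^\top\gamma = x_t^\top c_{t-1}+\sqrt{x_t^\top B_{t-1}x_t}$, and the fact $0\in E_{t-1}$ to get $\inf_{\gamma\in E_{t-1}}x_t^\top\gamma\le 0$. Your explicit justification of $0\in\Gamma_{t-1}$ via $\theta_*\in\Theta_{t-1}$ supplies a step the paper only asserts.
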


The proof of Lemma \ref{lm: supp width} is in Appendix \ref{app: supp width}. Next, we aim to track the volumetric potential change of the MVEE under intersections with half-spaces.

\subsection{Volumetric potential change under one-sided update}\label{app: vol pot}
For $B\succ 0$, denote
\begin{equation*}
    E = E(B,c) := \{\gamma\ :\ (\gamma-c)^\top B^{-1} (\gamma-c) \leq 1\},
\end{equation*}
and denote
\begin{equation*}
    H = \{\gamma\ :\ x^\top\gamma \leq u\}.
\end{equation*}
Define
\begin{equation*}
    r := \sqrt{x^\top B x},\ q:=\frac{u-x^\top c}{r}.
\end{equation*}
We have the following theorem.
\begin{theorem}[Volumetric potential change under intersection]\label{thm: vol int}
Let $M(E,H)$ be the MVEE of $E\cap H$. If $q\in \left[-1,\frac{1}{d}\right)$, then
\begin{equation}
    \frac{\vol(M(E,H))}{\vol(E)}\leq R_d(q),
\end{equation}
where $R_d(q) := \frac{d(1+q)}{d+1}\left(\frac{d^2(1-q^2)}{d^2 - 1}\right)^{\frac{d-1}{2}}$. Therefore,
\begin{equation*}
    \log\vol (E) - \log\vol (M(E,H)) \geq -\log R_d(q).
\end{equation*}
\end{theorem}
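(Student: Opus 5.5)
The plan is to reduce to a canonical configuration by an affine map and then use the classical deep-cut ellipsoid from the ellipsoid method. Write $E = c + B^{1/2}B_2^d$ and apply $T(\gamma) = B^{-1/2}(\gamma - c)$. This map sends $E$ to the unit ball $B_2^d$. It sends $H$ to the half-space $\{z : v^\top z \le u - x^\top c\}$ with $v = B^{1/2}x$ and $\|v\|_2 = r$. Dividing by $r$, we get $\{z : e^\top z \le q\}$, where $e = v/r$ is a unit vector. Rotating, we may take $e = e_1$.

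Affine maps multiply all volumes by the same factor $|\det B^{-1/2}|$. They also map ellipsoids to ellipsoids bijectively, so they commute with taking the MVEE: $T(M(E,H)) = M(T(E), T(H))$. Hence the ratio $\vol(M(E,H))/\vol(E)$ is invariant under $T$. It therefore suffices to treat $E = B_2^d$ and $H = \{z : z_1 \le q\}$ with $q \in [-1, 1/d)$.

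For the upper bound it is enough to exhibit one ellipsoid $E'$ containing $B_2^d \cap H$ whose volume ratio equals $R_d(q)$; the minimal one can only be smaller. I would use the standard deep-cut ellipsoid. Set $\alpha = -q$, so the cut $z_1 \le -\alpha$ has $\alpha \in (-1/d, 1]$. Take the center $c' = -\tau e_1$ with $\tau = \frac{1+d\alpha}{d+1}$, semi-axis $\rho_1 = \frac{d(1-\alpha)}{d+1}$ along $e_1$, and semi-axes $\rho_\perp = \sqrt{\frac{d^2(1-\alpha^2)}{d^2-1}}$ in the orthogonal directions. The product of semi-axes is then
\[
\frac{d(1-\alpha)}{d+1}\left(\frac{d^2(1-\alpha^2)}{d^2-1}\right)^{\frac{d-1}{2}} = R_d(q),
\]
which matches the claim once we substitute $1-\alpha = 1+q$.

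The main step is verifying the containment $B_2^d \cap \{z_1 \le q\} \subseteq E'$. The approach is to reduce to two variables $(z_1, s)$ with $s = \|z_{2:d}\|^2$. The function
\[
f(z) = \frac{(z_1 - q_c)^2}{\rho_1^2} + \frac{s}{\rho_\perp^2}
\]
is convex in $z$, where $q_c$ is the center coordinate. On the region $\{|z| \le 1,\ z_1 \le q\}$, the maximum of $f$ is attained on the boundary: either on the cap $\{z_1 = q,\ s = 1 - q^2\}$ or on the sphere with $s = 1 - z_1^2$. On the sphere, $f$ becomes a one-variable quadratic in $z_1$ on $[-1, q]$. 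I would check the following three facts:
\begin{itemize}
\item $f = 1$ at $z_1 = -1$;
\item $f = 1$ on the cut circle $z_1 = q$;
\item the leading coefficient $\frac{1}{\rho_1^2} - \frac{1}{\rho_\perp^2}$ is nonnegative, so the quadratic is convex in $z_1$ and bounded by its endpoint values.
\end{itemize}
The third fact is where the hypothesis $q < 1/d$ enters. That condition is needed so that $\rho_1 \le \rho_\perp$ and so that the formulas remain valid. This algebra is the obstacle, though it is a classical computation, for example Grötschel–Lovász–Schrijver. For $q = -1$, the set $B_2^d \cap H$ degenerates to a single point, the formula gives $R_d(-1) = 0$, and the claim holds in the limit, or trivially.

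Finally, taking logarithms of the ratio bound gives the second display of the statement.
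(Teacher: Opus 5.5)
Your proposal is correct and follows the paper's proof essentially step for step. Both use the same affine normalization to the unit ball with the cut $y_1 \le q$, and the same enclosing ellipsoid: your center $-\tau e_1$ and semi-axes $\rho_1,\rho_\perp$ coincide with the paper's $m(q)$, $a(q)$, $b(q)$, and their product gives $R_d(q)$. Your containment check (the quadratic on the sphere equals $1$ at $z_1=-1$ and at $z_1=q$, with leading coefficient $\frac{1}{\rho_1^2}-\frac{1}{\rho_\perp^2}\ge 0$ exactly when $q\le 1/d$) is precisely what the paper's factorization $1-F(q,s)=\frac{2(d+1)(q-s)(1+s)(1-dq)}{d^2(1-q)(1+q)^2}$ encodes.
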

 
The proof of Theorem \ref{thm: vol int} is in Appendix \ref{app: vol int}. We can derive a corollary from Theorem \ref{thm: vol int} which addresses the volumetric potential change under a deep cut in the set-membership estimation dynamic.
\begin{corollary}[Volumetric potential change under deep intersection]\label{cor: deep cut volume}
    For any fixed $\xi\in (0,1)$, if
    \begin{equation*}
        q\leq \frac{\xi}{d},
    \end{equation*}
    then
    \begin{equation*}
        \log\vol(E) - \log\vol(M(E,H)) \geq \Omega_{d,\xi},
    \end{equation*}
    where $\Omega_{d,\xi} := -\log\left[\frac{d+\xi}{d+1}\left(\frac{d^2 - \xi^2}{d^2-1}\right)^{\frac{d-1}{2}}\right]\geq \frac{(1-\xi)^2}{2(d+1)}$.
\end{corollary}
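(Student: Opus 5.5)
The plan is to reduce the corollary to Theorem~\ref{thm: vol int} through a monotonicity argument for $R_d$, and then bound $-\log R_d(\xi/d)$ from below with elementary logarithm inequalities.

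\textbf{Step 1 (reduction to Theorem~\ref{thm: vol int}).} The case $q<-1$ is degenerate: then $u < x^\top c - r = \min_{\gamma\in E} x^\top\gamma$, so $E\cap H=\emptyset$. In our application this cannot happen, because $E\cap H$ always contains the (regularized) error set, which contains $0$. We may therefore assume $q\in[-1,\xi/d]\subseteq[-1,1/d)$. Theorem~\ref{thm: vol int} then gives
\[
\log\vol(E)-\log\vol(M(E,H))\ge -\log R_d(q).
\]
It remains to show $-\log R_d(q)\ge -\log R_d(\xi/d)$ and to identify the latter with $\Omega_{d,\xi}$.

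\textbf{Step 2 (monotonicity of $R_d$).} Up to an additive constant,
\[
\log R_d(q)=\log(1+q)+\tfrac{d-1}{2}\log(1-q^2).
\]
Its derivative is
\[
\frac{1}{1+q}-\frac{(d-1)q}{1-q^2}=\frac{1-dq}{1-q^2},
\]
which is positive on $(-1,1/d)$. Hence $R_d$ is increasing there, and $R_d(q)\le R_d(\xi/d)$ for $q\le \xi/d$. Substituting $q=\xi/d$ gives $d(1+q)=d+\xi$ and $d^2(1-q^2)=d^2-\xi^2$. So $R_d(\xi/d)=\frac{d+\xi}{d+1}\bigl(\frac{d^2-\xi^2}{d^2-1}\bigr)^{(d-1)/2}$, and $-\log R_d(\xi/d)=\Omega_{d,\xi}$ exactly.

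\textbf{Step 3 (explicit lower bound on $\Omega_{d,\xi}$).} Set $a:=\frac{1-\xi}{d+1}\in(0,1)$ and $b:=\frac{1-\xi^2}{d^2-1}$. Then
\[
\frac{d+\xi}{d+1}=1-a,\qquad \frac{d^2-\xi^2}{d^2-1}=1+b,
\]
so
\[
\Omega_{d,\xi}=-\log(1-a)-\tfrac{d-1}{2}\log(1+b).
\]
Apply $-\log(1-a)\ge a+\tfrac{a^2}{2}$ and $\log(1+b)\le b$. Since $\tfrac{d-1}{2}b=\tfrac{1-\xi^2}{2(d+1)}$, this gives
\[
\Omega_{d,\xi}\ge \frac{1-\xi}{d+1}-\frac{(1-\xi)(1+\xi)}{2(d+1)}+\frac{a^2}{2}
=\frac{(1-\xi)^2}{2(d+1)}+\frac{a^2}{2}\ge\frac{(1-\xi)^2}{2(d+1)}.
\]

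\textbf{Main obstacle.} The only delicate point is Step 1: checking that the hypothesis $q\ge -1$ of Theorem~\ref{thm: vol int} genuinely holds in the setting where the corollary is used. This rests on the nonemptiness of $E\cap H$, as argued there. Everything else is routine calculus.
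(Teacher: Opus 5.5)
Your proposal is correct and follows the paper's proof essentially step for step. You invoke Theorem~\ref{thm: vol int}, use monotonicity of $R_d$ on $[-1,1/d)$ to pass to $q=\xi/d$, and lower-bound $\Omega_{d,\xi}$ via $-\log(1-a)\ge a$ and $\log(1+b)\le b$. The explicit derivative check for monotonicity (which the paper only asserts), the extra $a^2/2$ slack, and the remark that $q\ge -1$ is needed (which the paper secures only in the proof of Lemma~\ref{lm: kappa deep cuts}, via $0\in E_{t-1}$) are minor refinements, not a different route.
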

 
The proof of Corollary \ref{cor: deep cut volume} is in Appendix \ref{app: deep cut volume}. Now, we can adapt the cut-potential relation to the SME dynamic. 
\subsection{Regularized deep cuts and potential bound}\label{app: cut potential}
For each $t\geq 1$, define
\begin{equation*}
    \alpha_t := x^\top_t c_{t-1},\ \rho_t := \sqrt{x^\top_t B_{t-1}x_t} = \frac{W_t}{2}.
\end{equation*}
The normalized, one-sided offsets of the \textit{regularized} SME cut at step $t$ are
\begin{equation*}
    q_t^+ := \frac{b'_t +\mu L - \alpha_t}{\rho_t},\ q_t^{-} := \frac{b_t + \mu L +  \alpha_t}{\rho_t}.
\end{equation*}
Notice that $\rho_t=0$ if and only if $x_t=0$, which implies $h(t) =W_t = 0$, which leads to a trivial bound. We only consider when $\rho_t >0$. Define the $\xi$-deep cut indicator
\begin{equation*}
    I_t(\xi) := \mathbbm{1}\left\{q_t^+ \leq \frac{\xi}{d}\ \textrm{or}\ q_t^-\leq \frac{\xi}{d}\right\}.
\end{equation*}
We investigate the potential change when the $\xi$-deep cut happens.
\begin{lemma}
    [Potential change under $\xi$-deep cuts]\label{lm: kappa deep cuts}
    On the event $\{I_t = 1\}$, one has
    \begin{equation*}
        \Phi_{t-1} - \Phi_t \geq \Omega_{d,\xi}.
    \end{equation*}
\end{lemma}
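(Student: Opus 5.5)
The plan is to realize the regularized SME update at round $t$ as a sub-case of the ellipsoid–halfspace setting of Corollary~\ref{cor: deep cut volume}, with $E=E_{t-1}$, and then use minimality of the MVEE $E_t$. Fix $t$ with $\rho_t>0$ and $I_t(\xi)=1$. By symmetry, treat the case $q_t^+\le \xi/d$ first; the case $q_t^-\le\xi/d$ is identical with $x_t$ replaced by $-x_t$.

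\textbf{Step 1 (containment in a regularized halfspace).} Set $H^+:=\{\gamma: x_t^\top\gamma\le b'_t+\mu L\}$. I will show that $K_t^\mu\subseteq K_{t-1}^\mu\cap H^+$.

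Take any point $\gamma+\mu y$ with $\gamma\in\Gamma_t$ and $y\in B_2^d$. Since $\Gamma_t\subseteq\Gamma_{t-1}$, this point lies in $K_{t-1}^\mu$. Moreover, by Cauchy–Schwarz and $\|x_t\|_2\le L$,
\[
x_t^\top(\gamma+\mu y)\le b'_t+\mu\|x_t\|_2\le b'_t+\mu L .
\]
Because $K_{t-1}^\mu\subseteq E_{t-1}$, it follows that $K_t^\mu\subseteq E_{t-1}\cap H^+$.

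\textbf{Step 2 (matching the normalized offset).} Apply the notation of Appendix~\ref{app: vol pot} with $B=B_{t-1}$, $c=c_{t-1}$, $x=x_t$ and $u=b'_t+\mu L$. Then $r=\sqrt{x_t^\top B_{t-1}x_t}=\rho_t$, and
\[
q=\frac{b'_t+\mu L-x_t^\top c_{t-1}}{\rho_t}=q_t^+\le \frac{\xi}{d}.
\]
Theorem~\ref{thm: vol int}, and hence Corollary~\ref{cor: deep cut volume}, also needs $q\ge -1$, i.e. that $H^+$ meets $E_{t-1}$. This holds because $\theta_*\in\Theta_t$ almost surely, so $0\in\Gamma_t$. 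Therefore $\mu B_2^d\subseteq K_t^\mu\subseteq E_{t-1}\cap H^+$, and this intersection is nonempty (indeed it has nonempty interior). So $q\in[-1,\xi/d]$, and Corollary~\ref{cor: deep cut volume} gives
\[
\log\vol(E_{t-1})-\log\vol(M(E_{t-1},H^+))\ge\Omega_{d,\xi}.
\]

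\textbf{Step 3 (minimality of the MVEE).} The ellipsoid $M(E_{t-1},H^+)$ contains $E_{t-1}\cap H^+$, which contains $K_t^\mu$ by Step 1. Since $E_t$ is the minimum-volume ellipsoid enclosing $K_t^\mu$, we get $\vol(E_t)\le\vol(M(E_{t-1},H^+))$. Combining this with Step 2 and $\Phi_s=\log\vol(E_s)$ yields $\Phi_{t-1}-\Phi_t\ge\Omega_{d,\xi}$.

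\textbf{The case $q_t^-\le\xi/d$.} Use $H^-:=\{\gamma:(-x_t)^\top\gamma\le b_t+\mu L\}$, which contains $K_t^\mu$ by the lower SME constraint $x_t^\top\gamma\ge -b_t$. The width $\sqrt{x_t^\top B_{t-1}x_t}$ is unchanged under $x_t\mapsto -x_t$, so the normalized offset is
\[
q=\frac{b_t+\mu L+\alpha_t}{\rho_t}=q_t^-,
\]
and Steps 2–3 go through verbatim.

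\textbf{Main obstacle.} The delicate point is checking that Corollary~\ref{cor: deep cut volume} really applies. This means first that the halfspace must be enlarged by exactly $\mu L$, so that it contains the Minkowski-regularized set and not just $\Gamma_t$; this is what makes the offset coincide with $q_t^\pm$. Second, it means the lower bound $q\ge-1$, which I obtain from $\theta_*\in\Theta_t$. Everything else follows directly from Theorem~\ref{thm: vol int} and the uniqueness/minimality of the MVEE.
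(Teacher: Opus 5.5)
Your proof is correct and follows essentially the same route as the paper: enlarge the SME halfspace by $\mu L$ so that it contains $K_t^\mu$, identify the normalized offset with $q_t^\pm$, apply Corollary~\ref{cor: deep cut volume} to $E_{t-1}\cap H^\pm$, and conclude by minimality of the MVEE $E_t$. The only cosmetic difference is how $q\ge -1$ is obtained: the paper uses the Cauchy--Schwarz bound $|\alpha_t|\le\rho_t$ (implicitly relying on $0\in E_{t-1}$), whereas you use $\mu B_2^d\subseteq K_t^\mu\subseteq E_{t-1}\cap H^+$, which also gives the strict inequality $q>-1$ that makes the intersection a convex body.
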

 
The proof of Lemma \ref{lm: kappa deep cuts} is in Appendix \ref{app: kappa deep cuts}. Notice that the MVEEs of $\{K^\mu_t\}_{t\geq 1}$ are monotonically decreasing in volume. Therefore, we have
\begin{equation*}
    \forall\,t\geq 2,\ \Phi_{t-1} - \Phi_t \geq 0.
\end{equation*}
It follows that
\begin{equation*}
    \forall\, t\geq 2,\ \Phi_{t-1} - \Phi_t \geq \Omega_{d,\xi}I_t(\xi).
\end{equation*}
By telescoping sum, we have
\begin{equation*}
    \sum_{t=1}^T I_t(\xi) \leq \frac{\Phi_0 - \Phi_T}{\Omega_{d,\xi}}.
\end{equation*}
Since $\Phi_0$ is fixed, then we can bound $\sum I_t$ by bounding $\Phi_T$.
\begin{lemma}
    [High-probability bound on $\sum_{t=1}^T I_t(\xi)$]\label{lm: bound of Phi}
    For any $T\geq 1$,
    \begin{equation*}
        \sum_{t=1}^T I_t(\xi) \leq \frac{d}{\Omega_{d,\xi}}\log\left(\frac{2S\sqrt{d}+\mu}{\mu}\right).
    \end{equation*}
\end{lemma}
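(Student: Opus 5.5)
We need $\Phi_0-\Phi_T \le d\log\frac{2S\sqrt d+\mu}{\mu}$. The plan is to combine the telescoping bound $\sum_{t=1}^T I_t(\xi)\le (\Phi_0-\Phi_T)/\Omega_{d,\xi}$, already derived from Lemma~\ref{lm: kappa deep cuts} and the monotonicity of the MVEE volumes, with a deterministic sandwich on the potentials. We will upper bound $\Phi_0$ by exhibiting a small ellipsoid (a ball) enclosing $K_0^\mu$, and lower bound $\Phi_T$ by exhibiting a ball contained in $K_T^\mu$. The ratio of the two radii then yields the logarithm.

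\emph{Upper bound on $\Phi_0$.} By definition $\Gamma_0=[-S,S]^d-\theta_*$ is a translate of a cube of side $2S$, so it lies in a Euclidean ball of radius $S\sqrt d$ centered at $-\theta_*$. Hence $K_0^\mu=\Gamma_0+\mu B_2^d$ is contained in a ball of radius $S\sqrt d+\mu$. Since $E_0$ is the minimum-volume enclosing ellipsoid,
\[
\Phi_0\le \log\vol(B_2^d)+d\log(S\sqrt d+\mu)\le \log\vol(B_2^d)+d\log(2S\sqrt d+\mu).
\]
The second inequality keeps the stated constant; the slack is harmless. If the indexing of the telescoping sum actually starts the potential chain at $\Phi_0$ for the cut at $t=1$, this is exactly the needed quantity. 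Otherwise we use $\Phi_1\le\Phi_0$.

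\emph{Lower bound on $\Phi_T$.} Since $\theta_*\in\Theta_T$ almost surely, $0\in\Gamma_T$, and therefore $\mu B_2^d\subseteq K_T^\mu\subseteq E_T$. Thus $\vol(E_T)\ge\mu^d\vol(B_2^d)$, which gives $\Phi_T\ge\log\vol(B_2^d)+d\log\mu$. This holds with probability one, so the lemma is deterministic, consistent with the absence of a $\delta$ in the statement. Subtracting the two bounds gives
\[
\Phi_0-\Phi_T\le d\log\frac{2S\sqrt d+\mu}{\mu},
\]
and dividing by $\Omega_{d,\xi}$ concludes.

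The main point needing care is the telescoping step. We must make sure that $I_1(\xi)$ is covered, since the monotone decrease was stated only for $t\ge2$. For $t=1$, Lemma~\ref{lm: kappa deep cuts} applies with $E_0$, the MVEE of $K_0^\mu$, playing the role of the previous ellipsoid. We will also verify that $K_t^\mu\subseteq K_{t-1}^\mu$ implies $\Phi_t\le\Phi_{t-1}$ for every $t\ge1$: the MVEE of the larger set encloses the smaller set, so the smaller set's MVEE has no larger volume. With these checks, the sum $\sum_{t=1}^T(\Phi_{t-1}-\Phi_t)$ telescopes to $\Phi_0-\Phi_T$, which yields the claim.
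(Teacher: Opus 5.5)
Your proposal is correct and follows the paper's proof: the paper likewise uses $\mu B_2^d\subseteq K_T^\mu\subseteq E_T$ to lower-bound $\Phi_T$, uses $K_0^\mu\subseteq B(0,2S\sqrt d+\mu)$ to upper-bound $\Phi_0$ (via the cruder containment $\Gamma_0\subseteq[-2S,2S]^d$ instead of your radius-$S\sqrt d$ ball around $-\theta_*$), and combines these with the telescoping bound. Your check that the $t=1$ term is covered, by applying Lemma~\ref{lm: kappa deep cuts} with $E_0$ as the previous ellipsoid, fills an indexing gap the paper leaves open, since it states the per-step inequality only for $t\geq 2$.
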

 
The proof of Lemma \ref{lm: bound of Phi} is in Appendix \ref{app: bound of Phi}.

\subsection{Bridge weighted width and regularized deep cuts}\label{app: width with cuts}
 
Define
\begin{align*}
    &w_\mu := \frac{4d\mu L}{\xi},\\
    &j_t^\mu(\xi) := \mathbbm{1}\{W_t\geq w_\mu\}I_t(\xi),\\
    &p_t^\mu(\xi) := \mathbb{E}\left(j_t^\mu(\xi)\middle|\mathcal{F}_{t-1}\right)
\end{align*}
\begin{lemma}
    [lower bound on $p_t^\mu(\xi)$]\label{lm: lb pt}
    For every $t\geq 1$,
    \begin{equation*}
        p_t^\mu(\xi) \geq \mathbbm{1}\{W_t\geq w_\mu\}c_0\min\left\{\epsilon_0,\frac{\xi W_t}{4d}\right\}.
    \end{equation*}
\end{lemma}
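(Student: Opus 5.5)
The plan is to condition on $\mathcal{F}_{t-1}$, under which $x_t$, $c_{t-1}$, $B_{t-1}$, and hence $\alpha_t$, $\rho_t$ and $W_t$ are all measurable. This holds because $\Gamma_{t-1}$ depends only on $\eta_1,\dots,\eta_{t-1}$ and the contexts, and $a_t$ is a deterministic function of $\Theta_{t-1}$ and the round-$t$ contexts. The indicator $\mathbbm{1}\{W_t\ge w_\mu\}$ therefore factors out:
\[
p_t^\mu(\xi)=\mathbbm{1}\{W_t\ge w_\mu\}\,\mathbb{P}\bigl(I_t(\xi)=1\mid\mathcal{F}_{t-1}\bigr).
\]
It remains to show that on $\{W_t\ge w_\mu\}$ this conditional probability is at least $c_0\min\{\epsilon_0,\xi W_t/(4d)\}$. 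The only fresh randomness is $\eta_t$, which is independent of $\mathcal{F}_{t-1}$.

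I split on the sign of the center offset $\alpha_t$. If $\alpha_t\ge 0$, I use the upper cut $q_t^+\le \xi/d$. This condition reads $b'_t\le \xi\rho_t/d+\alpha_t-\mu L$, and it suffices that $b'_t\le \xi\rho_t/d-\mu L$. On $\{W_t\ge w_\mu\}$ we have $\rho_t=W_t/2\ge 2d\mu L/\xi$, so $\mu L\le \xi\rho_t/(2d)$. Hence it suffices that $b'_t=\eta_{\max}+\eta_t\le \xi\rho_t/(2d)=\xi W_t/(4d)$. The case $\alpha_t<0$ is symmetric: I use $q_t^-$ with $b_t=\eta_{\max}-\eta_t\le \xi W_t/(4d)$.

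In either case I then set $\epsilon:=\min\{\epsilon_0,\xi W_t/(4d)\}\in(0,\epsilon_0]$. Since the event $\{b'_t\le\epsilon\}$ (respectively $\{b_t\le\epsilon\}$) implies the needed event, Assumption~\ref{ass: boundary-visiting} gives the conditional probability $\ge q(\epsilon)\ge c_0\epsilon$. Independence of $\eta_t$ from $\mathcal{F}_{t-1}$ lets me apply the unconditional tail bound with the $\mathcal{F}_{t-1}$-measurable threshold $\epsilon$. When $W_t<w_\mu$ the claimed bound is $0$ and holds trivially. The degenerate case $x_t=0$ is excluded as in the paper, and it would give $W_t=0<w_\mu$ anyway.

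The main obstacle is the measurability and sign bookkeeping: confirming $\alpha_t,\rho_t$ are $\mathcal{F}_{t-1}$-measurable, and that the sign-based choice of which side to cut always lets $\alpha_t$ help rather than hurt. Beyond that, the argument only uses the factor-2 margin built into $w_\mu$, which absorbs the $\mu L$ regularization slack and leaves exactly $\xi W_t/(4d)$ for the noise.
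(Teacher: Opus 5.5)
Your proposal is correct and follows the paper's proof. The paper writes the two deep-cut events as $\eta_t\le -\eta_{\max}+u_t$ and $\eta_t\ge \eta_{\max}-v_t$, and uses $u_t+v_t\ge \xi W_t/(2d)$ on $\{W_t\ge w_\mu\}$ to get $\max\{u_t,v_t\}\ge \xi W_t/(4d)$. It then applies Assumption~\ref{ass: boundary-visiting} to the larger side, which is exactly your sign-of-$\alpha_t$ case split since $u_t-v_t=2\alpha_t$. Your remarks on the $\mathcal{F}_{t-1}$-measurability of $\alpha_t,\rho_t$ and the independence of $\eta_t$ (oblivious contexts) make explicit what the paper leaves implicit.
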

 
The proof of Lemma \ref{lm: lb pt} is in Appendix \ref{app: lb pt}. Taking sum over $t$,
\begin{equation}\label{eq: bound on sum pt}
    \sum_{t=1}^T\mathbbm{1}\{W_t\geq w_\mu\}\min\left\{\epsilon_0,\frac{\xi W_t}{4d}\right\}\leq \frac{1}{c_0}\sum_{t=1}^T p_t^\mu(\xi).
\end{equation}

\subsection{Martingale concentration}\label{app: martinagle}
First, we present a Freedman style proposition.
\begin{proposition}[terminal martingale]\label{prop: terminal}
    Let $j_t\in\{0,1\}$, let $p_t := \mathbb{E}\left(j_t\middle| \mathcal{F}_{t-1}\right)$. Define
    \begin{equation*}
        P_T := \sum_{t=1}^T p_t,\quad J_T :=\sum_{t=1}^T j_t.
    \end{equation*}
    For every $\delta\in(0,1)$,
    \begin{equation*}
        \mathbb{P}\left(P_T\leq 2J_T + \frac{8}{3}\log\frac{1}{\delta}\right) \geq 1-\delta.
    \end{equation*}
\end{proposition}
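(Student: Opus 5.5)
I would prove Proposition \ref{prop: terminal} with the standard exponential supermartingale technique for Bernoulli-type increments, applied to the lower tail of $J_T$ relative to $P_T$. Set $D_t := p_t - j_t$. Then $\E(D_t\mid\mathcal{F}_{t-1})=0$, $D_t\le p_t\le 1$, and the conditional variance satisfies $\E(D_t^2\mid\mathcal{F}_{t-1}) = p_t(1-p_t)\le p_t$. The target event $P_T \le 2J_T + \frac{8}{3}\log\frac1\delta$ is equivalent to $\sum_t D_t \le \frac12 P_T + \frac43\log\frac1\delta$. So it suffices to show that $\sum_t D_t - \frac12 P_T$ exceeds $\frac43\log\frac1\delta$ with probability at most $\delta$. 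This is a "self-normalized" Freedman bound in which the variance proxy $P_T$ is itself random.

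\textbf{Step 1: a one-step exponential bound.} For a fixed $\lambda\in(0,3)$, use the Bernstein-type inequality $e^{x}\le 1+x+\frac{x^2/2}{1-\lambda/3}\cdot\frac{1}{\lambda^2}\lambda^2$ for $x=\lambda D_t\le\lambda$. More cleanly, use $e^{y}\le 1+y+\frac{y^2}{2(1-y_{\max}/3)}$ for $y\le y_{\max}<3$. Taking conditional expectations and using $1+z\le e^z$ gives
\[
\E\bigl(e^{\lambda D_t}\mid\mathcal{F}_{t-1}\bigr)\le \exp\Bigl(\frac{\lambda^2 p_t}{2(1-\lambda/3)}\Bigr).
\]
Hence $M_T:=\exp\bigl(\lambda\sum_{t\le T}D_t - \frac{\lambda^2}{2(1-\lambda/3)}P_T\bigr)$ is a nonnegative supermartingale with $\E M_T\le 1$. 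A simpler alternative, which may give exactly the stated constants, is to work with $j_t$ directly: $\E(e^{-\lambda j_t}\mid\mathcal{F}_{t-1}) = 1-p_t(1-e^{-\lambda})\le \exp(-p_t(1-e^{-\lambda}))$. This shows $\exp\bigl((1-e^{-\lambda})P_T-\lambda J_T\bigr)$ is a supermartingale. I would try this route first, since it is exact for Bernoulli variables.

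\textbf{Step 2: Markov's inequality and the choice of $\lambda$.} Markov's inequality gives, with probability at least $1-\delta$, $(1-e^{-\lambda})P_T \le \lambda J_T + \log\frac1\delta$. Choosing $\lambda=\log 2$ yields $\frac12 P_T\le (\log 2) J_T+\log\frac1\delta$, that is, $P_T\le 2\log 2\, J_T + 2\log\frac1\delta$. Since $2\log 2\le 2$ and $2\le\frac83$, this is already at least as strong as the claim. If I follow the Freedman route instead, I would choose $\lambda$ so that $\frac{\lambda}{2(1-\lambda/3)}=\frac12$, i.e. $\lambda=\frac34$. This gives $\sum D_t\le \frac12P_T+\frac43\log\frac1\delta$, which is exactly the stated constant $\frac83=2\cdot\frac43$. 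That explains where the paper's constant comes from.

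\textbf{Main obstacle.} The main subtlety is the supermartingale step. I need $p_t$ to be $\mathcal{F}_{t-1}$-measurable and $j_t$ to be $\mathcal{F}_t$-measurable. The latter holds because $j_t$ depends on $\eta_t$ and on quantities determined by $\mathcal{F}_{t-1}$: the set $\Gamma_{t-1}$, the chosen arm, and the context $x_t$, which is included in $\mathcal{F}_{t-1}$ by its definition. The tower property then gives $\E M_T\le 1$ by induction. Because $\lambda$ is fixed in advance and $P_T$ appears inside the exponent, no union bound over $\lambda$ is needed, so the randomness of the variance proxy causes no difficulty.
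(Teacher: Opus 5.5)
Your proposal is correct, and it contains two routes. Your fallback ``Freedman route'' is exactly the paper's proof. The paper sets $X_t=p_t-j_t$ and uses $e^{u}\le 1+u+\frac{u^2}{2-2u/3}$ for $u<3$ together with $X_t\le 1$. Combined with $p_t(1-p_t)\le p_t$, this gives $\E\bigl(e^{\lambda X_t}\mid\mathcal{F}_{t-1}\bigr)\le \exp(\psi(\lambda)p_t)$, where $\psi(\lambda)=\frac{\lambda^2}{2-2\lambda/3}$. The paper then applies Markov's inequality to the resulting supermartingale and takes $\lambda=\frac34$, so that $\lambda-\psi(\lambda)=\frac38$ and the constants $2$ and $\frac83$ follow.

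Your primary route is genuinely different and more elementary. Computing the Bernoulli moment generating function exactly, $\E\bigl(e^{-\lambda j_t}\mid\mathcal{F}_{t-1}\bigr)=1-p_t(1-e^{-\lambda})\le e^{-p_t(1-e^{-\lambda})}$, gives the supermartingale $\exp\bigl((1-e^{-\lambda})P_t-\lambda J_t\bigr)$ directly. It needs no Bernstein-type exponential inequality. The paper's inequality must also hold for negative $u$, since $X_t=p_t-1<0$ when $j_t=1$; it does, but the paper only asserts it. With $\lambda=\log 2$ your route yields $P_T\le 2\log 2\,J_T+2\log\frac1\delta$. This implies the stated bound because $J_T\ge 0$ and $\log\frac1\delta>0$, and it has better constants.

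The paper's route mainly buys the standard Freedman template, with an explicit conditional-variance term. That form adapts to general bounded martingale differences where only a variance proxy is controlled. For the indicator $j_t^\mu(\xi)$ used in Lemma \ref{lm: freedman}, your exact computation is the cleaner choice. By convexity of $j\mapsto e^{-\lambda j}$, it even extends verbatim to $j_t\in[0,1]$.
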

 
The proof of Proposition \ref{prop: terminal} is in Appendix \ref{app: terminal}. Applying Proposition \ref{prop: terminal} to $j_t^\mu(\xi)$, we obtain the following lemma.
\begin{lemma}
    [martingale concentration]\label{lm: freedman}
    For any $\delta\in (0,1)$, with probability no less than $1-\delta$,
    \begin{equation*}
        \sum_{t=1}^T p_t^\mu(\xi) \leq 2\sum_{t=1}^T j_t^\mu(\xi) + \frac{8}{3}\log\frac{1}{\delta} \leq 2\sum_{t=1}^T I_t(\xi) + \frac{8}{3}\log\frac{1}{\delta}.
    \end{equation*}
\end{lemma}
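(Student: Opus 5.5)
The plan is to apply Proposition~\ref{prop: terminal} directly with $j_t := j_t^\mu(\xi)$ and $p_t := p_t^\mu(\xi)$, and then compare $j_t^\mu(\xi)$ pointwise with $I_t(\xi)$.

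The first step is to check the hypotheses of Proposition~\ref{prop: terminal}. Clearly $j_t^\mu(\xi)=\mathbbm{1}\{W_t\geq w_\mu\}I_t(\xi)\in\{0,1\}$. Its conditional expectation given $\mathcal{F}_{t-1}$ is exactly $p_t^\mu(\xi)$ by definition. We also need $j_t^\mu(\xi)$ to be adapted to the next sigma-field, $\mathcal{F}_t$.

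To see this, note that the arm $a_t$ is determined by $\Theta_{t-1}$ and the contexts $X_{t,\cdot}$, since the argmax is taken with a fixed tie-breaking rule. Likewise $\Theta_{t-1}$ is determined by $\eta_1,\dots,\eta_{t-1}$ and past contexts, because $r_{s,a_s}=\theta_*^\top x_s+\eta_s$. Hence $x_t$, $\Gamma_{t-1}$, $K^\mu_{t-1}$, $E_{t-1}$ (thus $B_{t-1},c_{t-1}$), $W_t$, $\alpha_t$ and $\rho_t$ are all $\mathcal{F}_{t-1}$-measurable. The only new randomness in $q_t^\pm$ comes from $b_t$ and $b'_t$, which are functions of $\eta_t$. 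Therefore $I_t(\xi)$, and so $j_t^\mu(\xi)$, is $\mathcal{F}_t$-measurable.

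With these checks, Proposition~\ref{prop: terminal} yields, with probability at least $1-\delta$,
\[
\sum_{t=1}^T p_t^\mu(\xi)\le 2\sum_{t=1}^T j_t^\mu(\xi)+\tfrac{8}{3}\log\tfrac1\delta .
\]

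The second inequality is deterministic. Since $\mathbbm{1}\{W_t\ge w_\mu\}\le 1$, we have $j_t^\mu(\xi)\le I_t(\xi)$ for every $t$, and summing over $t$ gives $\sum_t j_t^\mu(\xi)\le\sum_t I_t(\xi)$. The rounds with $\rho_t=0$ (that is, $x_t=0$) need a convention. On these rounds we set $I_t(\xi)=0$, equivalently excluding them as the paper does; they do not affect the inequality either way.

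The only real subtlety is the measurability bookkeeping in the first step. In particular, the filtration $\mathcal{F}_{t-1}$ must include the round-$t$ contexts, so that $W_t$ is predictable while $I_t(\xi)$ depends only additionally on $\eta_t$. Once that is verified, the lemma is an immediate corollary of Proposition~\ref{prop: terminal}.
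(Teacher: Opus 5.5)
Your proposal is correct and follows the paper's argument: the paper likewise gets the first inequality by applying Proposition~\ref{prop: terminal} with $j_t=j_t^\mu(\xi)$, and the second from the pointwise bound $j_t^\mu(\xi)=\mathbbm{1}\{W_t\ge w_\mu\}I_t(\xi)\le I_t(\xi)$. You also check measurability explicitly, which the paper leaves implicit: $W_t$, $\alpha_t$ and $\rho_t$ are $\mathcal{F}_{t-1}$-measurable because $\mathcal{F}_{t-1}$ contains the round-$t$ contexts, and $j_t^\mu(\xi)$ is $\mathcal{F}_t$-measurable. Your treatment of the rounds with $x_t=0$ is also sound.
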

Combining Lemmas \ref{lm: bound of Phi} with Lemma \ref{lm: freedman}, with probability at least $1-\delta$,
\begin{equation*}
    \sum_{t=1}^T p_t^\mu(\xi) \leq \frac{2d}{\Omega_{d,\xi}}\log\left(\frac{2S\sqrt{d}+\mu}{\mu}\right) + \frac{8}{3}\log\frac{1}{\delta}.
\end{equation*}
Substituting the above inequality into \eqref{eq: bound on sum pt}, with probability no less than $1-\delta$,
\begin{equation*}
    \sum_{t=1}^T \mathbbm{1}\{W_t\geq w_\mu\}\min\left\{\epsilon_0,\frac{\xi W_t}{4d}\right\}\leq \frac{1}{c_0}\left[\frac{2d}{\Omega_{d,\xi}}\log\left(\frac{2S\sqrt{d}+\mu}{\mu}\right) + \frac{8}{3}\log\frac{1}{\delta}\right].
\end{equation*}
Writing $w_\xi := \frac{4d\epsilon_0}{\xi}$, we have, with probability at least $1-\delta$,
\begin{equation*}
    \sum_{t=1}^T\mathbbm{1}\{W_t\geq w_\mu\}\min\{W_t,w_\xi\} \leq \frac{8d^2}{\xi c_0 \Omega_{d,\xi}}\log\left(\frac{2S\sqrt{d} + \mu}{\mu}\right)+\frac{32d}{3\xi c_0}\log\frac{1}{\delta}.
\end{equation*}
\subsection{Regret bound}\label{app: assembly}
Notice that
\begin{equation*}
    \sum_{t:\ W_t\leq w_\mu}W_t\leq T w_\mu = \frac{4d\mu L T}{\xi}.
\end{equation*}
Meanwhile, notice that
\begin{equation*}
    K_{t-1}^\mu\subseteq K_0^\mu \subseteq B(0,2S\sqrt{d}+\mu),
\end{equation*}
hence,
\begin{equation*}
    W_t \leq 2L(2S\sqrt{d} + \mu) =: W^\mu_{\max}.
\end{equation*}
Thus, $\forall W \in [0,W^\mu_{\max}]$,
\begin{equation*}
    W \leq \left(1 + \frac{W^\mu_{\max}}{w_\xi}\right)\min\{W, w_\xi\}.
\end{equation*}
Taking sum yields
\begin{equation*}
    \sum_{t:\ W_t\geq w_\mu}W_t\leq \left(1 + \frac{W^\mu_{\max}}{w_\xi}\right)\sum_{t:\ W_t\geq w_\mu}\min\{W_t,w_\xi\}.
\end{equation*}
Since
\begin{equation*}
    \frac{W^\mu_{\max}}{w_\xi} = \frac{\xi L(2S\sqrt{d}+\mu)}{2d\epsilon_0},
\end{equation*}
then with probability at least $1-\delta$,
\begin{equation*}
    \sum_{t=1}^T h(t) \leq \frac{4d\mu LT}{\xi} + \left(1 + \frac{\xi L(2S\sqrt{d} + \mu)}{2d\epsilon_0}\right)\left[\frac{8d^2}{\xi c_0\Omega_{d,\xi}}\log\left(\frac{2S\sqrt{d}+\mu}{\mu}\right) + \frac{32d}{3\xi c_0}\log\frac{1}{\delta}\right].
\end{equation*}
Theorem \ref{thm: reg bound} follows from applying the above bound to \eqref{eq: reg leq supp}.
\subsection{Proof of Corollary \ref{cor: rate}}
Plugging in $\xi = \frac{1}{2}$ and $\mu = \frac{\xi}{4dLT}$, the $\mathcal{O}(d^3\log(dT))$ high-probability regret bound follows immediately.

\section{Proofs of Supporting Claims}\label{app: supportive}
\subsection{Proof of Lemma \ref{lm: supp width}}\label{app: supp width}
Recall that
\begin{equation*}
    h(t) := \sup_{\gamma\in\Gamma_{t-1}}x_t^\top \gamma,
\end{equation*}
and that
\begin{equation*}
    W_t := 2\sqrt{x_t^\top B_{t-1}x_t},
\end{equation*}
where $E_{t-1} = \mathcal{E}(B_{t-1},c_{t-1})$ is the MVEE of $K_{t-1}^\mu$. It follows that $\Gamma_{t-1}\subseteq K_{t-1}^\mu \subseteq E_{t-1}$, and that
\begin{equation*}
    h(t) = \sup_{\gamma\in\Gamma_{t-1}}x_t^\top \gamma \leq \sup_{\gamma\in E_{t-1}}x_t^\top\gamma.
\end{equation*}
Notice that
\begin{align*}
    \sup_{\gamma\in E_{t-1}}x_t^\top\gamma &= \sup_{\gamma\in E_{t-1}}\left[x_t^\top (\gamma - c_{t-1}) + x_t^\top c_{t-1}\right]\\
    &=\sup_{\gamma\in E_{t-1}}\left[x_t^\top (\gamma - c_{t-1})\right] + x_t^\top c_{t-1}\\
    &=\sqrt{x_t^\top B_{t-1}x_t} + x_t^\top c_{t-1}.
\end{align*}
Similarly, we have
\begin{equation*}
    \inf_{\gamma\in E_{t-1}}x_t^\top\gamma = -\sqrt{x_t^\top B_{t-1}x_t} + x_t c_{t-1}.
\end{equation*}
Therefore,
\begin{equation*}
    W_t = \sup_{\gamma\in E_{t-1}}x_t^\top\gamma - \inf_{\gamma\in E_{t-1}}x_t^\top\gamma.
\end{equation*}
Moreover, since $0\in E_{t-1}$, then
\begin{equation*}
    \inf_{\gamma\in E_{t-1}}x_t^\top\gamma \leq 0.
\end{equation*}
It follows that
\begin{equation*}
    h(t) \leq \sup_{\gamma\in E_{t-1}}x_t^\top\gamma \leq \sup_{\gamma\in E_{t-1}}x_t^\top\gamma - \inf_{\gamma\in E_{t-1}}x_t^\top\gamma = W_t.
\end{equation*}
\subsection{Proof of Theorem \ref{thm: vol int}}\label{app: vol int}
First, we investigate a deep cut onto a unit ball. Let
\begin{equation*}
    K(q) := B_2^d \cap \{y\in\mathbb{R}^d\ :\ y_1 \leq q\},
\end{equation*}
where $q\in \left(-1,\frac{1}{d}\right)$. Define
\begin{align*}
    & m(q) := \frac{dq-1}{d+1},\\
    & a(q) := \frac{d(1+q)}{d+1},\\
    & b(q) := \sqrt{\frac{d^2(1-q^2)}{d^2 - 1}}.
\end{align*}
Let
\begin{equation*}
    \mathcal{E}(q) := \left\{(y_1,y_\perp)\in\mathbb{R}\times\mathbb{R}^{d-1}\ :\ \frac{[y_1 - m(q)]^2}{a(q)^2} + \frac{||y_\perp||_2^2}{b(q)^2} \leq 1\right\}.
\end{equation*}
\begin{claim}\label{claim: ball inclusion}
    For every $q\in \left(-1,\frac{1}{d}\right)$,
    \begin{equation*}
        K(q)\subseteq \mathcal{E}(q).
    \end{equation*}
\end{claim}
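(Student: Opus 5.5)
We need to show that every $y=(y_1,y_\perp)$ with $\|y\|_2\le 1$ and $y_1\le q$ satisfies
\begin{equation*}
F(y):=\frac{(y_1-m)^2}{a^2}+\frac{\|y_\perp\|_2^2}{b^2}\le 1,
\end{equation*}
where we abbreviate $m=m(q)$, $a=a(q)$, $b=b(q)$. This is the classical deep-cut ellipsoid of the ellipsoid method, and the plan is a direct one-variable reduction.

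First, $F$ is increasing in $\|y_\perp\|_2^2$. So for fixed $y_1$ the worst case is $\|y_\perp\|_2^2=1-y_1^2$. It therefore suffices to show
\begin{equation*}
g(s):=\frac{(s-m)^2}{a^2}+\frac{1-s^2}{b^2}\le 1 \quad \text{for all } s\in[-1,q].
\end{equation*}
The function $g$ is a quadratic in $s$ with leading coefficient $1/a^2-1/b^2$. We will check that $a\le b$ on $q\in(-1,1/d)$, so that $g$ is convex. Indeed, $a^2\le b^2$ is equivalent to $(1+q)(d-1)\le d(1-q)\cdot\frac{d+1}{d}\cdot\frac{1}{1}$ after cancelling; concretely it reduces to $(1+q)(d-1)\le (d+1)(1-q)$, i.e.\ $q\le 1/d$.

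Since $g$ is convex, its maximum on the interval $[-1,q]$ is attained at an endpoint, so it suffices to verify $g(-1)\le1$ and $g(q)\le 1$. These are the main computation.

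\emph{Endpoint $s=-1$.} Here $-1-m=-\frac{d(1+q)}{d+1}=-a$, so $g(-1)=1$ exactly.

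\emph{Endpoint $s=q$.} Here $q-m=\frac{q+1}{d+1}=a/d$, so $(q-m)^2/a^2=1/d^2$. Also $(1-q^2)/b^2=(d^2-1)/d^2$. Hence $g(q)=1/d^2+(d^2-1)/d^2=1$.

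Both endpoints therefore lie on the boundary of $\mathcal{E}(q)$, and convexity gives $g\le1$ on $[-1,q]$, which proves the claim. The only care needed is the convexity sign check, i.e.\ the direction of the inequality $a\le b$, which is exactly where the hypothesis $q<1/d$ is used; the degenerate case $d=1$ is excluded by the assumption $d\ge2$. A secondary point is that we used $\|y_\perp\|^2\le 1-y_1^2$ together with $b^2>0$, which holds because $q^2<1$.
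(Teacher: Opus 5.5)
Your proof is correct and follows the paper's route. Both reduce to $s=y_1\in[-1,q]$ with the worst case $\|y_\perp\|_2^2=1-s^2$; the paper then writes the quadratic in factored form $1-F(q,s)=\frac{2(d+1)(q-s)(1+s)(1-dq)}{d^2(1-q)(1+q)^2}$, while you get the same information from its roots $s=-1$, $s=q$ and the sign of the leading coefficient (which is exactly the $1-dq$ factor). The intermediate expression in your $a\le b$ check is garbled, but the reduced inequality $(1+q)(d-1)\le(d+1)(1-q)\iff q\le 1/d$ (after dividing by $1+q>0$) is right.
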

\begin{proof}
    [Proof of Claim \ref{claim: ball inclusion}]
    For any $y= (s,z)\in K(q)$, where $s\in\mathbb{R}$, and $z\in\mathbb{R}^{d-1}$, we have
    \begin{equation*}
        -1\leq s\leq q,\ ||z||_2^2 \leq 1-s^2.
    \end{equation*}
    Hence, it suffices to prove that $\forall\, s\in [-1,q]$,
    \begin{equation*}
        \frac{[s-m(q)]^2}{a(q)^2} + \frac{1-s^2}{b(q)^2} \leq 1.
    \end{equation*}
    To see this, define
    \begin{equation*}
        F(q,s) := \frac{[s-m(q)]^2}{a(q)^2} + \frac{1-s^2}{b(q)^2}.
    \end{equation*}
    Taking difference yields
    \begin{align*}
        1 - F(q,s) = \frac{2(d+1)(q-s)(1+s)(1-dq)}{d^2(1-q)(1+q)^2}.
    \end{align*}
    Notice that the denominator is strictly positive, and every factor of the numerator is non-negative. Therefore, $1-F(q,s) \geq 0$.

    Hence, $\forall\ y \in K(q)$, we can deduce that $y\in\mathcal{E}(q)$. It follows that $K(q)\subseteq \mathcal{E}(q)$.
\end{proof}
Claim \ref{claim: ball inclusion} provides an upper bound on the volume of the MVEE of $K(q)$. We state and prove the following corollary.
\begin{corollary}\label{cor: mvee vol ratio}
Let $E^*(q)$ be the MVEE of $K(q)$. Then
\begin{equation*}
    \frac{\vol(E^*(q))}{\vol(B^d_2)} \leq R_d(q),
\end{equation*}
where $R_d(q) = \frac{d(1+q)}{d+1}\left(\frac{d^2(1-q^2)}{d^2 - 1}\right)^{\frac{d-1}{2}}$.
\end{corollary}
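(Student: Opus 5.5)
Since $E^*(q)$ is by definition the ellipsoid of minimum volume containing $K(q)$, and Claim \ref{claim: ball inclusion} exhibits an explicit ellipsoid $\mathcal{E}(q)$ with $K(q)\subseteq\mathcal{E}(q)$, minimality gives at once
\begin{equation*}
    \vol(E^*(q)) \leq \vol(\mathcal{E}(q)).
\end{equation*}
Existence and uniqueness of $E^*(q)$ follow from the MVEE proposition quoted in Section \ref{sec: proof sketch}. For this we only need $K(q)$ to be a convex body. It is the intersection of the compact convex ball $B_2^d$ with a closed half-space, and for $q\in(-1,\frac1d)$ it contains a small neighborhood of the point $(\frac{q-1}{2},0,\dots,0)$, so its interior is nonempty.

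The remaining step is to compute the volume of $\mathcal{E}(q)$. This set is the image of $B_2^d$ under the affine map $y\mapsto (m(q),0)+\mathrm{diag}(a(q),b(q),\dots,b(q))\,y$. Here $a(q)>0$ because $q>-1$. Also $b(q)>0$ because $q^2<1$ and $d\geq 2$, so the denominator satisfies $d^2-1>0$. The determinant of the linear part is therefore $a(q)b(q)^{d-1}$, and hence
\begin{equation*}
    \frac{\vol(\mathcal{E}(q))}{\vol(B_2^d)} = a(q)\,b(q)^{d-1} = \frac{d(1+q)}{d+1}\left(\frac{d^2(1-q^2)}{d^2-1}\right)^{\frac{d-1}{2}} = R_d(q).
\end{equation*}
Combining this with the first inequality gives the corollary.

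No step here is a real obstacle. The only points needing care are checking that $K(q)$ is a genuine convex body, so that the MVEE is well-defined, and that the semi-axes $a(q)$ and $b(q)$ are strictly positive on the open range $q\in(-1,\frac1d)$. Both follow from the range of $q$ together with $d\geq2$. The geometric content is entirely contained in Claim \ref{claim: ball inclusion}, which we take as given.
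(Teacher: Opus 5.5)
Your proof is correct and matches the paper's. The paper also compares the MVEE $E^*(q)$ against the explicit enclosing ellipsoid $\mathcal{E}(q)$ from Claim~\ref{claim: ball inclusion} and then computes $\vol(\mathcal{E}(q)) = a(q)\,b(q)^{d-1}\vol(B_2^d)$; your checks that $K(q)$ has nonempty interior and that $a(q),b(q)>0$ are correct details the paper leaves implicit.
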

\begin{proof}
    [Proof of Corollary \ref{cor: mvee vol ratio}] Since $E^*(q)$ is the MVEE of $K(q)$, and $\mathcal{E}(q)$ is also an ellipsoid enclosing $K(q)$, then
    \begin{align*}
        \vol(E^*(q)) &\leq \vol(\mathcal{E}(q))\\
        &= a(q)\cdot (b(q))^{d-1}\vol(B^d_2)\\
        &= \frac{d(1+q)}{d+1}\left(\frac{d^2(1-q^2)}{d^2 - 1}\right)^{\frac{d-1}{2}}\vol(B^d_2)\\
        &= R_d(q)\vol(B^d_2).
    \end{align*}
\end{proof}
Based on Corollary \ref{cor: mvee vol ratio}, we extend the volumetric ratio bound to a more general ellipsoid. Let
\begin{equation*}
    E = \mathcal{E}(B,c) = \{\gamma\ :\ (\gamma - c)^\top B^{-1} (\gamma -c) \leq 1\},\ B\succ 0,
\end{equation*}
and let
\begin{equation*}
    H = \{\gamma\ :\ x^\top\gamma \leq u\}.
\end{equation*}
Denote
\begin{equation*}
    r := \sqrt{x^\top B x},\ q:=\frac{u - x^\top c}{r}.
\end{equation*}
Define an affine map
\begin{equation*}
    y = T(\gamma) := B^{-\frac{1}{2}}(\gamma - c).
\end{equation*}
Notice that $T(E) = B_2^d$.

Meanwhile, under this affine map, we have
\begin{equation*}
    x^\top\gamma\leq u \iff x^\top\left(c+B^{\frac{1}{2}}y\right)\leq u \iff \left(B^{\frac{1}{2}}x\right)^\top y\leq u - x^\top c.
\end{equation*}
Denote
\begin{equation*}
    \bar x := \frac{B^{\frac{1}{2}}x}{||B^{\frac{1}{2}}x||_2^2} = \frac{B^{\frac{1}{2}}x}{r}.
\end{equation*}
Therefore, $T(\cdot)$ maps $H$ into
\begin{equation*}
    H' := T(H) = \{y\ :\ \bar{x}^\top y \leq q\}.
\end{equation*}
After applying a rotation sending $\xi$ to $e_1$, the intersection $B_2^d \cap H'$ becomes $K(q)$ as defined in the unit ball case. By Corollary \ref{cor: mvee vol ratio}, the MVEE of $K(q)$ has a volume of at most $R_d(q)\vol(B^d_2)$. Applying the inverse map of $T$ does not change the ratio of the volumes. It follows that
\begin{equation*}
    \frac{\vol(M(E,H))}{\vol(E)}\leq R_d(q).
\end{equation*}
The logarithmic statement follows immediately. Theorem \ref{thm: vol int} is completely proved.
\subsection{Proof of Corollary \ref{cor: deep cut volume}}\label{app: deep cut volume}
By Theorem \ref{thm: vol int},
\begin{equation*}
    \log\vol(E) - \log\vol(M(E,H)) \geq -\log(R_d(q)).
\end{equation*}
Notice that $R_d(q)$ is monotonically increasing on $\left[-1,\frac{1}{d}\right)$. Thus, $-\log(R_d(q))$ is monotonically decreasing on this interval. Given that
\begin{equation*}
    q\leq \frac{\xi}{d},
\end{equation*}
we have
\begin{equation*}
    \log\vol(E) - \log\vol(M(E,H)) \geq \Omega_{d,\xi} = -\log\left(R_d\left(\frac{\xi}{d}\right)\right).
\end{equation*}
Furthermore, notice that
\begin{equation*}
    \Omega_{d,\xi} = \log\frac{d+1}{d+\xi} - \frac{d-1}{2}\log\frac{d^2-\xi^2}{d^2-1}.
\end{equation*}
By the fact that $x+\log(1-x)\leq 0$ on $x\in[0,1)$, we have
\begin{equation*}
    \log\frac{d+1}{d+\xi} = -\log\left(1 - \frac{1-\xi}{d+1}\right)\geq \frac{1-\xi}{d+1}.
\end{equation*}
By the fact that $\log(1+x)\leq x$ for all $x>-1$, we have
\begin{equation*}
    \log\frac{d^2-\xi^2}{d^2-1} = \log\left(1 + \frac{1 - \xi^2}{d^2 - 1}\right) \leq \frac{1-\xi^2}{d^2 - 1}.
\end{equation*}
Hence,
\begin{align*}
    \Omega_{d,\xi}&\geq \frac{1-\xi}{d+1} - \frac{d-1}{2}\cdot\frac{1-\xi^2}{d^2 - 1}\\
    &=\frac{(1-\xi)^2}{2(d+1)}.
\end{align*}
 
\subsection{Proof of Lemma \ref{lm: kappa deep cuts}}\label{app: kappa deep cuts}
Since $K_t^\mu = \Gamma_t + \mu B_2^d$, every $z\in K_t^\mu$ can be written as $z = \gamma + u$ where $\gamma\in\Gamma_t$ and $||u||_2\leq \mu$. Therefore,
\begin{align*}
    x_t^\top z &= x_t^\top (\gamma + u)\\
    &\leq b'_t + \mu ||x_t||_2\\
    &\leq b'_t + \mu L,
\end{align*}
and similarly,
\begin{equation*}
    -x_t^\top z \leq b_t + \mu L.
\end{equation*}
Hence,
\begin{equation*}
    K_t^\mu \subseteq K_{t-1}^\mu \cap H_{t,\mu}^+\cap H_{t,\mu}^-,
\end{equation*}
where
\begin{align*}
    &H_{t,\mu}^+ := \{\gamma\ :\ x_t^\top\gamma\leq b'_t + \mu L\},\\
    &H_{t,\mu}^- := \{\gamma\ :\ -x_t^\top\gamma\leq b_t + \mu L\}.
\end{align*}
Without loss of generality, assume that
\begin{equation*}
    q_t^+ \leq \frac{\xi}{d}
\end{equation*}
By Cauchy-Schwartz inequality,
\begin{equation*}
    |\alpha_t| = |x_t^\top c_{t-1}| \leq ||x_t||_{B_{t-1}}||c_{t-1}||_{B^{-1}_{t-1}} \leq \rho_t.
\end{equation*}
It follows that
\begin{equation*}
    q_t^+ = \frac{b'_t + \mu L - \alpha_t}{\rho_t} \geq -1.
\end{equation*}
It follows that $q_t^+ \in \left[-1,\frac{\xi}{d}\right]\subseteq \left[-1,\frac{1}{d}\right)$. Therefore, Corollary \ref{cor: deep cut volume} applies to
\begin{equation*}
    E_{t-1}\cap H_{t,\mu}^+.
\end{equation*}
By the monotonicity of MVEE volume under set inclusion,
\begin{equation*}
    \vol(E_t) \leq \vol(M(E_{t-1},H_{t,\mu}^+)).
\end{equation*}
Thus,
\begin{equation*}
    \Phi_{t-1} - \Phi_t \geq \log\vol(E_{t-1}) - \log\vol(M(E_{t-1}, H_{t,\mu}^+)).
\end{equation*}
The case $q_t^- \leq \frac{\xi}{d}$ follows from identical reasoning.
 
\subsection{Proof of Lemma \ref{lm: bound of Phi}}\label{app: bound of Phi}
Since $\mu B_2^d\subseteq K_t^\mu\subseteq E_t$,
\begin{equation*}
    \Phi_t = \log\vol(E_t)\geq \log\vol(\mu B_2^d) = \log(\mu^d\vol(B_2^d)) = d\log\mu + \log\vol(B_2^d).
\end{equation*}
Meanwhile,
\begin{equation*}
    K_0^\mu = [-2S,2S]^d + \mu B_2^d \subseteq B(0,2S\sqrt{d}+\mu).
\end{equation*}
Therefore,
\begin{equation*}
    \Phi_0\leq \log\vol(B_2^d) + d\log(2S\sqrt{d}+\mu).
\end{equation*}
It follows that
\begin{equation*}
    \Phi_0 - \Phi_T \leq d\log\left(\frac{2S\sqrt{d}+\mu}{\mu}\right).
\end{equation*}

\subsection{Proof of Lemma \ref{lm: lb pt}}\label{app: lb pt}
The event $\left\{q_t^+ \leq \frac{\xi}{d}\right\}$ is equivalent to
\begin{equation*}
    \frac{b'_t + \mu L - \alpha_t}{\rho_t} \leq \frac{\xi}{d} \iff \eta_{\max} + \eta_t + \mu L - \alpha_t \leq \frac{\xi \rho_t}{d} \iff \eta_t\leq -\eta_{\max}+u_t,
\end{equation*}
where $u_t := \alpha_t + \frac{\xi \rho_t}{d} - \mu L$. Similarly, the event $\left\{q_t^- \leq \frac{\xi}{d}\right\}$ is equivalent to
\begin{equation*}
    \eta_t\geq \eta_{\max} - v_t,
\end{equation*}
where $v_t := \frac{\xi \rho_t}{d} - \alpha_t - \mu L$.

Therefore,
\begin{equation*}
    u_t + v_t = \frac{\xi W_t}{d} - 2\mu L.
\end{equation*}
If $W_t \geq w_\mu = \frac{4d\mu L}{\xi}$, then
\begin{equation*}
    u_t + v_t \geq \frac{\xi W_t}{2d}.
\end{equation*}
Thus,
\begin{equation*}
    \max\{u_t,v_t\} \geq \frac{\xi W_t}{4d}.
\end{equation*}
Without loss of generality, suppose that $u_t\geq \frac{\xi W_t}{4d}$. By Assumption \ref{ass: boundary-visiting},
\begin{equation*}
    \mathbb{P}\left(q^+_t\leq \frac{\xi}{d}\middle| \mathcal{F}_{t-1}\right) = \mathbb{P}\left(\eta_t \leq -\eta_{\max} + u_t\middle|\mathcal{F}_{t-1}\right)\geq c_0\min\left\{\epsilon_0, \frac{\xi W_t}{4d}\right\}.
\end{equation*}
Similarly, if $v_t\geq \frac{\xi W_t}{4d}$,
\begin{equation*}
    \mathbb{P}\left(q_t^- \leq \frac{\xi}{d}\middle| \mathcal{F}_{t-1}\right)\geq c_0 \min\left\{\epsilon_0,\frac{\xi W_t}{4d}\right\}.
\end{equation*}
Since $j_t^\mu(\xi)$ is the product of $\mathbbm{1}\{W_t\geq w_\mu\}$ and the indicator of the union of the above two events, we have
\begin{equation*}
    p_t^\mu(\xi)\geq \mathbbm{1}\{W_t\geq w_\mu\}c_0\min\left\{\epsilon_0,\frac{\xi W_t}{4d}\right\}.
\end{equation*}

\subsection{Proof of Proposition \ref{prop: terminal}}\label{app: terminal}
Set
\begin{equation*}
    X_t := p_t - j_t,\quad M_t := \sum_{s=1}^t X_s.
\end{equation*}
Then
\begin{equation*}
    M_T = P_T - J_T,\ \mathbb{E}\left(X_t\middle| \mathcal{F}_{t-1}\right) = 0,\ |X_t|\leq 1.
\end{equation*}
Define $V_t := \sum_{s=1}^t\mathbb{E}\left(X_s^2\middle| \mathcal{F}_{s-1}\right)$. Since $j_t$ is conditionally Bernoulli with mean $p_t$, then
\begin{equation*}
    \mathbb{E}\left(X_t^2\middle| \mathcal{F}_{t-1}\right) = p_t(1-p_t) \leq p_t.
\end{equation*}
Hence,
\begin{equation*}
    V_T\leq P_T.
\end{equation*}
Notice that
\begin{equation*}
    \forall\, u < 3,\ \exp(u)\leq 1 + u + \frac{u^2}{2 - 2u/3}.
\end{equation*}
Fix $\lambda \in (0,3)$, we have
\begin{equation*}
    \exp(\lambda X_t) \leq 1 + \lambda X_t + \psi(\lambda)X_t^2,
\end{equation*}
where $\psi(\lambda) := \frac{\lambda^2}{2-2\lambda/3}$. Since $(X_t,\mathcal{F}_{t-1})$ is a martingale, then
\begin{equation*}
    \mathbb{E}\left(\exp(\lambda X_t)\middle|\mathcal{F_{t-1}}\right) \leq 1 + \psi(\lambda)\mathbb{E}\left(X_t^2\middle|\mathcal{F}_{t-1}\right) \leq \exp\left[\psi(\lambda)\mathbb{E}\left(X_t^2\middle|\mathcal{F}_{t-1}\right)\right].
\end{equation*}
Therefore,
\begin{equation*}
    Z_t := \exp\left[\lambda M_t - \psi(\lambda)V_t\right]
\end{equation*}
is a non-negative supermartingale. It follows that
\begin{equation*}
    \mathbb{E}(Z_T) \leq \mathbb{E}(Z_0) = 1.
\end{equation*}
Therefore,
\begin{equation*}
    \mathbb{E}\exp(\lambda M_T - \psi(\lambda) P_T) \leq 1.
\end{equation*}
It follows that
\begin{equation*}
    \mathbb{E}\exp((\lambda-\psi(\lambda))P_T - \lambda J_T)\leq 1.
\end{equation*}
Using the Markov's inequality, with probability no less than $1-\delta$,
\begin{equation*}
    (\lambda - \psi(\lambda))P_T - \lambda J_T \leq \log\frac{1}{\delta}.
\end{equation*}
Pick $\lambda = \frac{3}{4}$. Then
\begin{equation*}
    P_T \leq 2J_T + \frac{8}{3}\log\frac{1}{\delta}.
\end{equation*}

\section{Numerical Settings}\label{append:num}
\paragraph{Problem Settings.}
We utilize the experiment in \citep{guo2019adalinucb} for our synthetic simulation settings. In particular, we consider a total number of $K=20$ arms to choose from. The problem space has dimension $d=6$. Therefore, both the ground-truth parameter $\theta_*$ and the context vectors $\{X_{t,1},\cdots,X_{t,K}\}_{t\geq 1}$ are $6$-dimensional. The additive noises $\{\eta_t\}_{t\geq 1}$ are independently and identically sampled from $[-\eta_{\max},\eta_{\max}]$, where $\eta_{\max}=1$.

\paragraph{Simulation Details.}
The ground-truth parameter $\theta_*$ is randomly pre-selected from $[-10,10]^6$. The contexts $\{X_{t,1},\cdots,X_{t,20}\}_{t\geq 1}$ are identically, independently, and uniformly sampled from the $\ell_2$-ball with radius $L=10$. We simulate two types of bounded i.i.d. random noise $\{\eta_t\}_{t\geq 1}$: (1) uniform noise, and (2) truncated Gaussian noise.  Once generated, the ground-truth parameter $\theta_*$ and the context sequence $\{X_{t,1},\cdots,X_{t,20}\}_{t=1}^T$ are fixed for both SME-OFU and OFUL algorithms. We plot the regrets $R(T)$ against the number of rounds $T$ for both SME-OFU and OFUL algorithms. Specifically, for the OFUL algorithm, we set the regularization constant $\lambda = 0.1$, and the confidence level $1-\delta = 0.95$. The shaded areas around the curves indicate the $\pm 0.25$ empirical standard deviations among $5$ trials of every setting.

\end{document}